\newtheorem{assumption}{Assumption}
\newtheorem{definition}{Definition}
\newtheorem{theorem}{Theorem}
\newtheorem{lemma}{Lemma}
\newtheorem{remark}{Remark}
\journal{Neurocomputing}
\begin{document}

\begin{frontmatter}

\title{Estimating Stochastic Linear Combination of Non-linear \\ Regressions Efficiently and Scalably\tnoteref{t2}}
\tnotetext[t2]{This research was supported in part by the National Science Foundation (NSF) through grants  IIS-1910492 and CCF-1716400.}
\author[1]{Di Wang \corref{cor1}}
\ead{dwang45@buffalo.edu}
\author[1]{Xiangyu Guo \corref{cor2}}
\ead{xiangyug@buffalo.edu}
\author[1]{Chaowen Guan}
\ead{chaowen@buffalo.edu} 
\author[1]{Shi Li}
\ead{shil@buffalo.edu}
\author[1]{Jinhui Xu}
\ead{jinhui@buffalo.edu}
\address[1]{Department of Computer Science and Engineering\\
State University of New York at Buffalo\\
338 Davis Hall, Buffalo, 14260}
\cortext[cor1]{Corresponding author}
\cortext[cor2]{The first two authors contributed equally}

\begin{keyword}
linear regression\sep gradient descent \sep random projection



\end{keyword}

\begin{abstract}
{
 Recently, many machine learning and statistical models such as non-linear regressions,  the Single Index, Multi-index, Varying Coefficient Index Models and Two-layer Neural Networks can be reduced to or be seen as a special case of a new model which is called the \textit{Stochastic Linear Combination of Non-linear Regressions} model. However, due to the high non-convexity of the problem, there is no previous work study how to estimate the model. In this paper, we provide the first study on how to estimate the model efficiently and scalably. Specifically, we first show that with some mild assumptions, if the variate vector $x$ is multivariate Gaussian, then there is an algorithm whose output vectors have $\ell_2$-norm estimation errors of $O(\sqrt{\frac{p}{n}})$ with high probability, where $p$ is the dimension of $x$ and $n$ is the number of samples. The key idea of the proof is based on an observation motived by the Stein's lemma. Then we extend our result to the case where $x$ is bounded and sub-Gaussian using the zero-bias transformation, which could be seen as a generalization of the classic Stein's lemma. We also show that with some additional assumptions there is an algorithm whose output vectors have $\ell_\infty$-norm estimation errors of $O(\frac{1}{\sqrt{p}}+\sqrt{\frac{p}{n}})$ with high probability. We also provide a concrete example to show that there exists some link function which satisfies the previous assumptions. Finally, for both Gaussian and sub-Gaussian cases we propose a faster sub-sampling based algorithm and show that when the sub-sample sizes are large enough then the estimation errors will not be sacrificed by too much. Experiments for both cases support our theoretical results. 
    To the best of our knowledge, this is the first work that studies and provides theoretical guarantees for the stochastic linear combination of non-linear regressions model. 
}
\end{abstract}

\begin{keyword}
linear regression\sep gradient descent \sep random projection



\end{keyword}

\end{frontmatter}


\section{Introduction}
{Recently, many machine learning and statistical models  can be reduced to or be seen as a special case of a new model which is called the \textit{Stochastic Linear Combination of Non-linear Regressions} model, which can be defined as the followings. }

\begin{definition}[Stochastic Linear Combination of Non-linear Regressions]\label{def:1}
Given variates $x\in \mathbb{R}^p$ and $z_1, \cdots, z_k\in \mathbb{R}$ such that $\mathbb{E}[x]=0$ and  $z_i$'s for all $i\in[k]$ are i.i.d random variables independent of $x$ with $\mathbb{E}[z_i]=0$ and $\text{Var}(z_i)=1$, the response $y$ is given by
\begin{equation}\label{eq:1}
    y= \sum_{i=1}^k z_i f_i(\langle \beta_i^*, x\rangle)+ \epsilon, 
\end{equation}
where $\beta_1^*, \beta_2^*, \cdots, \beta^*_k\in \mathbb{R}^p$ are unknown parameters, $f_i$'s for all $i\in[k]$ are known (but could be non-convex) \textit{link functions}, and $\epsilon$ is some random noise (from an unknown distribution) satisfying $\mathbb{E}[\epsilon]=0$ and is independent of $x$ and $z_i$'s.

The goal is to estimate the parameters $\beta_j^*$ for all $j\in [k]$ from $n$ observations $(x_1, y_1, \{z_{1,i}\}_{i=1}^k)$, $(x_2, y_2, \{z_{2,i}\}_{i=1}^k), \cdots, (x_n, y_n, \{z_{n,i}\}_{i=1}^k)$. 
\end{definition}

This model has a close connection with many models in Statistics, Machine Learning, Signal Processing and Information Theory: (1) when $k=1$, the model is reduced to the non-linear regression estimation problem which has been studied in  \cite{zhang2018nonlinear,beck2013sparse,yang2016sparse,cook1999dimension} and is related to compressed sensing and image recovery as well; (2) when $k=1$ but the link function $f_1$ is unknown, it becomes the \textit{ \bf Single Index Model}, which is one of the most fundamental models in statistics and has been studied for many years \cite{ichimura1993semiparametric,horowitz2009semiparametric,kakade2011efficient,yang2017high,radchenko2015high}; (3) when $k\geq 1$, $z_i$'s are deterministic but $f_i$'s are unknown, this model will be a special case of the \textit{\bf Multi-index Model} which has been studied in \cite{li1991sliced,li1992principal,li1989regression,yang2017learning}; (4) 
when $k\geq 1$, $z_i$'s are stochastic but $f_i$'s are unknown, it will be the \textit{\bf Varying Coefficient Index Model} which was introduced by \cite{ma2015varying} and has wide applications in economics and medical science \cite{fan2008statistical}; (5) when all $f_i$'s are the same, the model can be viewed as a {\bf Two-layer Neural Network} with $k$ hidden nodes and random hidden-output layer weights.

{To estimate the parameters in Model (\ref{def:1}), the main challenge is that without the assumption that $f_i$'s are convex or similarities between them, it is hard to establish an objective function that can be efficiently optimized using optimization methods such as (Stochastic) Gradient Descent. Thus, due to the high convexity and randomness, there is no previous study how to solve the model efficiently. Recently, some works including \cite{yang2017high,na2018high,yang2017learning} studied and proposed efficient algorithms for the Single Index, Multi-index and Varying Coefficient Index models using Stein's Lemma. Their theoretical guarantees are measured in terms of $\|\beta_j-c\beta_j^*\|_2, j\in[k]$, where $\beta_j$ is the estimator for $\beta_j^*$ and $c$ is a constant depending on many parameters in the models (such as $f_i$'s, $\beta^*_j$'s and the distribution for $x$).  However there is a common issue related to the constant $c$ in these results: They did not provide a method to compute or even estimate $c$. 
Moreover, measuring the error by using the terms $\|\beta_j-c\beta_j^*\|_2, j\in[k]$ is quite meaningless due to the constant $c$. Thus, we wish to use other measurements and ideally, we hope to measure the error in terms of $\beta_j- \beta_j^*$ for all $j\in [k]$; that is, we do not introduce the constant $c$.}
The key question the paper tries to answer is:

{\bf Is there an efficient method whose output vectors $\beta_1$, $\beta_2, \cdots, \beta_k$ have small errors compared to $\beta^*_1, \beta^*_2, \cdots, \beta^*_k$?}

In this paper, we answer the question in the affirmative under some mild assumptions on the model. Specifically, our contributions can be summarized as follows.
\begin{enumerate}
    \item We first consider the case where $x$ multivariate Gaussian. In this case, we show that there is a special structure for each $\beta_j^*$, $j\in [k]$: $\beta^*_j= c_j \beta_j^{ols}$, where $c_j$ is a constant depending on the link function $f_j$ and $x$, and $\beta_j^{ols}$ is the Ordinary Lest Square estimator w.r.t $yz_j$ and $x$, {\em i.e.,} $\beta_j^{ols}=(\mathbb{E}[xx^T])^{-1}\mathbb{E}[z_jyx]$. Based on this key observation, we propose an algorithm which estimates $c_j$'s and $\beta_j^{ols}$'s, and outputs $\{\beta_j\}_{j=1}^{k}$ satisfying $\|\beta_j-\beta_j^*\|_2\leq O(\sqrt{\frac{p}{n}})$ for each $j\in [k]$ with high probability. Moreover, in order to make our algorithm faster, instead of using linear regression estimator to approximate $\beta_j^{ols}$, we use the sub-sampling covariance linear regression estimator \cite{dhillon2013new}. We show that if the sub-sample size is large enough, the error bound is almost the same as in the previous ones. 
    \item We then extend our result to the case when $x$ is (bounded)  sub-Gaussian. The challenge is that the result for the Gaussian case depends on some properties of Gaussian distribution which are not satisfied in the sub-Gaussian case. To overcome this, we use the zero-bias transformation \cite{goldstein1997stein}, which could be seen as a generalization of the Stein's lemma \cite{brillinger1982generalized}. Particularly, we show that instead of the equality $\beta^*_j= c_j \beta_j^{ols}$, we have the $\ell_\infty$ norm estimation error $\|\beta^*_j-  c_j \beta_j^{ols}\|_{\infty}\leq O(\frac{1}{\sqrt{p}})$ with some additional mild assumptions. Based on this and the same idea from the Gaussian case, we show that there exists an algorithm whose output vectors $\{\beta_j\}_{j=1}^{k}$ satisfy $\|\beta_j-\beta_j^*\|_\infty \leq O(\frac{1}{\sqrt{p}}+\sqrt{\frac{p}{n}})$ with high probability. Similarly, we also propose a sub-sampled version of our algorithm as in the Gaussian case.
    \item {While we provide some theoretical results in the previous parts, it is still unknown whether there exists any link function which satisfies these assumptions. To solve this problem, we consider the case where the link functions are sigmod function. And we show that with some assumptions on $x$, it indeed satisfies the previous assumptions.}
    \item {At the end, we show the experimental results on both Gaussian and sub-Gaussian cases with single/mixed type of link functions, and these results support our theoretical results above.  Specially, they show both of the effectiveness and scalability of our previous algorithms.}
\end{enumerate}
To the best of our knowledge, this is the first paper studying and providing the estimation error bound for Model (\ref{def:1}) in both Gaussian and  sub-Gaussian cases.

{This paper is a substantially extended version of our previous work appeared in AAAI'20 \cite{wangaaai20203}. The following are the main added contents. Firstly, we construct a concrete loss function which satisfies the assumptions in our theorems which has been studied in the conference version. Specifically, we show that under some assumptions on $x$, when the link function is the sigmoid function, then it satisfies the assumptions in Theorem 2 (see Theorem 7 for details). Secondly, we provide the proofs for all theorems and lemmas, and we believe the techniques can be used to other problems. }

{The rest of the paper is organized as follows. Section \ref{sec:2} introduces some related work. Section \ref{sec:3} gives some preliminaries on Sub-Gaussian random variable, necessary lemmas and assumptions throughout the paper. Section \ref{sec:4} describes our proposed algorithms for Gaussian case. Section \ref{sec:5} extends our algorithm to the Sub-Gaussian case. We provide all the proofs in Section \ref{sec:6}. Finally, we experimentally study our methods in Section \ref{sec:7}, and conclude them in Section \ref{sec:8}.
}
\section{Related Work}\label{sec:2}
As we  mentioned above, there is no previous work on Model (\ref{def:1}) with guarantees on the $\ell_2$ or $\ell_\infty$ norm of the errors $\beta_j - \beta_j^*$. Hence, below we compare with the results which are close to ours. 

When the link functions $f_j$'s are unknown,  Model (\ref{eq:1}) is just the Varying Coefficient Index Model. \cite{na2018high} provided the first efficient algorithm for this model. Although they considered the high dimensional sparse case, their method requires the underlying distribution of $x$ to be known, an unrealistic assumption for most applications.
Moreover, their estimation errors are measured by the differences between $\beta_j$'s and $c\beta^*_j$'s for an unknown $c$, while in our results we have fixed $c = 1$. 

When the link functions $f_j$'s are all the same, then our model can be reduced to the two-layer neural network with random hidden-output layer weights. Previous work on the convergence results all focused on the gradient descent type of methods such as those in \cite{zhang2019learning,zou2018stochastic,nitanda2019refined}. However, our method is based on Stein's lemma and its generalization. Compared with the gradient descent type methods, our algorithm is non-interactive (that is, we do not need to update estimators in each iteration) and parameter-free (that is, we don not need to tune the step-size). Moreover, our method can be extended to the case where the link functions $f_j$'s are different. 

Our method is motivated by Stein's lemma \cite{brillinger1982generalized} and its generalization, the zero-bias transformation. Several previous studies have used Stein's Lemma in various machine learning problems. For example, \cite{erdogdu2016scaled,erdogdu2016newton} used it to accelerate some optimization procedures,  \cite{liu2016stein} applied it to Bayesian inference, \cite{wang2019estimating} appied it to estimate smooth Generalized Linear Model in differential privacy model and \cite{yang2017high,yang2016sparse,na2018high,wei2019statistical} used it and its generalizations in the Single Index, Multi-index, Varying Coefficient Index and Generative models, respectively. The zero-bias transformation has also been used in \cite{erdogdu2019scalable} for estimating the Generalized Linear Model. However, due to the difference between the models, these algorithms cannot be applied to our problem.  

\section{Preliminaries}\label{sec:3}
In this section, we review some necessary definitions and lemmas. 
\begin{definition}[Sub-Gaussian] \label{def:2}
For a given constant $\kappa$, a random variable $x\in \mathbb{R}$ is said to be sub-Gaussian if it satisfies $\sup_{m\geq 1}\frac{1}{\sqrt{m}}\mathbb{E}[|x|^m]^{\frac{1}{m}}\leq \kappa$. The smallest such $\kappa$ is the sub-Gaussian norm of $x$ and it is denoted by $\|x\|_{\psi_2}$. 

Similarly, a random vector $x\in \mathbb{R}^p$ is called a sub-Gaussian vector if there exists a constant $\kappa$ such that $\sup_{v\in S^{p-1}}\|\langle x, v \rangle \|_{\psi_2}\leq \kappa$, where $S^{p-1}$ is the set of all $p$-dimensional unit vector. 
\end{definition}
In order to extend our results to the sub-Gaussian case, we will use the zero-bias transformation which is proposed by \cite{goldstein1997stein}. It is a generalization of the classic Stein's lemma in \cite{brillinger1982generalized}.
\begin{definition}\label{def:3}
Let $z$ be a random variable with mean $0$ and variance $\sigma^2$. Then there exists a random variable $z^*$ such that for all differentiable functions $f$ we have $\mathbb{E}[zf(z)]= \sigma^2\mathbb{E}[f'(z^*)]$. The distribution of $z^*$ is said to be the $z$-zero-bias distribution. 
\end{definition}
The standard Gaussian distribution is the unique distribution whose zero-bias distribution is itself. This is just the basic {\bf Stein's lemma}. 

\begin{lemma}\label{alemma:1}\cite{dhillon2013new}
Assume that $\mathbb{E}[x]=0, \mathbb{E}[x_ix_i^T]=\Sigma\in \mathbb{R}^{p\times p}$, and $\Sigma^{-\frac{1}{2}}x$ and $y$ are sub-Gaussian with norms $\kappa_x$ and $\gamma$ respectively. If $n\geq \Omega(\gamma\kappa_x p)$, then with probability at least  $1-3\exp(-p)$ we have 
\begin{equation}
    \|\Sigma^{\frac{1}{2}}(\tilde{\beta}^{ols}- \beta^{ols})\|_2 \leq C_1 \kappa_x \gamma \sqrt{\frac{p}{n}}, 
\end{equation}
where $\beta^{ols}= \Sigma^{-1}\mathbb{E}[yx]$ is the OLS estimator w.r.t $y$ and $x$,  $\tilde{\beta}^{ols}= (X^TX)^{-1}X^TY$ is the empirical one, and $C_1>0$ is some universal constant. 
\end{lemma}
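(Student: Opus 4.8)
\medskip
\noindent\emph{Proof sketch.}\ The plan is to reduce the bound to two concentration estimates --- one for the whitened empirical second-moment matrix and one for a mean-zero cross term --- and to combine them by a union bound. Write $\hat\Sigma=\tfrac1n X^TX=\tfrac1n\sum_{i=1}^n x_ix_i^T$, $\hat b=\tfrac1n X^TY=\tfrac1n\sum_{i=1}^n x_iy_i$, and $b=\mathbb{E}[yx]$, so that $\tilde\beta^{ols}=\hat\Sigma^{-1}\hat b$ and $\beta^{ols}=\Sigma^{-1}b$. Setting $\xi_i=y_i-\langle\beta^{ols},x_i\rangle$ we have $\mathbb{E}[x_i\xi_i]=b-\Sigma\beta^{ols}=0$, and since $\hat b-\hat\Sigma\beta^{ols}=\tfrac1n\sum_{i=1}^n x_i\xi_i$, inserting $\Sigma^{\frac12}\Sigma^{-\frac12}$ gives the identity
\begin{equation}
\Sigma^{\frac12}\big(\tilde\beta^{ols}-\beta^{ols}\big)=\big(\Sigma^{\frac12}\hat\Sigma^{-1}\Sigma^{\frac12}\big)\cdot\frac1n\sum_{i=1}^n u_i\xi_i,
\end{equation}
where $u_i:=\Sigma^{-\frac12}x_i$ is isotropic and sub-Gaussian with $\|u_i\|_{\psi_2}\le\kappa_x$. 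Taking the operator norm of the first factor and the Euclidean norm of the second, it then suffices to show that $\|\Sigma^{\frac12}\hat\Sigma^{-1}\Sigma^{\frac12}\|_{\mathrm{op}}\le 2$ and $\big\|\tfrac1n\sum_{i=1}^n u_i\xi_i\big\|_2\le O(\kappa_x\gamma\sqrt{p/n})$, each on an event of probability at least $1-O(e^{-p})$.

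For the first factor, note that $\Sigma^{-\frac12}\hat\Sigma\Sigma^{-\frac12}=\tfrac1n\sum_{i=1}^n u_iu_i^T$ has expectation $I_p$, so the standard non-asymptotic bound for the empirical covariance of isotropic sub-Gaussian vectors yields, with probability at least $1-2e^{-p}$, $\big\|\tfrac1n\sum_i u_iu_i^T-I_p\big\|_{\mathrm{op}}\le c\kappa_x^2(\sqrt{p/n}+p/n)$ for a universal $c$; under $n=\Omega(\gamma\kappa_x p)$ with a sufficiently large hidden constant this is at most $\tfrac12$, whence $\hat\Sigma\succ 0$, $\lambda_{\min}(\Sigma^{-\frac12}\hat\Sigma\Sigma^{-\frac12})\ge\tfrac12$, and $\|\Sigma^{\frac12}\hat\Sigma^{-1}\Sigma^{\frac12}\|_{\mathrm{op}}\le 2$ on that event.

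For the second factor I would run an $\varepsilon$-net argument over $S^{p-1}$ combined with Bernstein's inequality. First one checks that the residual $\xi=y-\langle\beta^{ols},x\rangle$ is sub-Gaussian: writing $\langle\beta^{ols},x\rangle=\langle\Sigma^{\frac12}\beta^{ols},u\rangle$ and using $\|\Sigma^{\frac12}\beta^{ols}\|_2=\|\mathbb{E}[yu]\|_2\le\sup_{v\in S^{p-1}}\mathbb{E}\big[|y\langle u,v\rangle|\big]$ together with the product-of-sub-Gaussians bound gives $\|\xi\|_{\psi_2}=O(\mathrm{poly}(\kappa_x)\,\gamma)$. Then for each fixed $v\in S^{p-1}$ the summands $\langle u_i,v\rangle\xi_i$ are i.i.d., mean zero (since $v^T\mathbb{E}[u_i\xi_i]=0$), and sub-exponential with $\psi_1$-norm $O(\mathrm{poly}(\kappa_x)\,\gamma)$, so Bernstein controls $\tfrac1n\sum_i\langle u_i,v\rangle\xi_i$ at scale $\asymp\kappa_x\gamma\sqrt{p/n}$ (in the Gaussian regime of Bernstein because $n=\Omega(p)$); a union bound over a $\tfrac12$-net of $S^{p-1}$ of cardinality at most $5^p$, followed by the usual net-to-sphere comparison, gives $\big\|\tfrac1n\sum_i u_i\xi_i\big\|_2\le O(\kappa_x\gamma\sqrt{p/n})$ with probability at least $1-e^{-p}$. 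Combining the two factors and the two failure probabilities $2e^{-p}+e^{-p}$ yields the lemma with some universal $C_1$. The delicate point --- and the main obstacle --- is precisely this cross term $\tfrac1n\sum_i u_i\xi_i$: because $u_i$ and $\xi_i$ are dependent one must establish the sub-exponential tail of $\langle u_i,v\rangle\xi_i$ directly (via products of sub-Gaussians) and then carefully track how all constants depend on $\kappa_x$ and $\gamma$ so that the leading factor collapses to the stated form $C_1\kappa_x\gamma$; this bookkeeping is carried out in \cite{dhillon2013new}.
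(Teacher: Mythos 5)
The paper does not prove this lemma at all: it is imported verbatim from \cite{dhillon2013new} (and restated in sub-sampled form as Lemma~\ref{alemma:new}), so there is no in-paper argument to compare against. Your sketch is the standard and correct route to such a bound, and the key identity
$\Sigma^{\frac12}(\tilde\beta^{ols}-\beta^{ols})=(\Sigma^{\frac12}\hat\Sigma^{-1}\Sigma^{\frac12})\cdot\frac1n\sum_i u_i\xi_i$
together with the split into (i) operator-norm control of the whitened empirical covariance and (ii) a net--Bernstein bound on the mean-zero cross term is exactly how results of this type are proved. Two points of bookkeeping deserve flagging, though neither is a conceptual gap. First, the covariance concentration step needs $n\gtrsim \kappa_x^4\,p$ to force $\|\frac1n\sum_i u_iu_i^T-I_p\|_{\mathrm{op}}\le\frac12$ with probability $1-2e^{-p}$; the hypothesis $n\ge\Omega(\gamma\kappa_x p)$ does not literally imply this unless one reads the $\Omega(\cdot)$ as absorbing the extra powers of $\kappa_x$ --- this looseness is inherited from the lemma statement itself, not introduced by you. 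Second, your own estimates give $\|\xi\|_{\psi_2}\lesssim\gamma(1+\kappa_x^2)$ and hence a leading factor of order $\kappa_x^3\gamma$ rather than the stated $\kappa_x\gamma$; you acknowledge deferring this collapse to the reference, and since every downstream theorem in the paper treats $\kappa_x$ and $\gamma$ as $\Theta(1)$, the discrepancy is immaterial here. With those caveats your argument is essentially complete and, unlike the paper, self-contained.
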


\begin{lemma}\label{alemma:3}\cite{erdogdu2019scalable}
Let $B^\delta(\tilde{\beta})$ denote the ball centered around $\tilde{\beta}$ with radius $\delta$. For $i=1, 2, \cdots, n$, let $x_i\in\mathbb{R}^p$ be i.i.d random vectors with a covariance matrix $\Sigma$. Given a function $g$ that is uniformly bounded by $L$ and $G$-Lipschitz, with probability at least $1-\exp(-p)$ we have
\begin{equation*}
      \sup_{\beta\in B^\delta(\tilde{\beta})}\Bigg|\frac{1}{n}\sum_{i=1}^n g(\langle x_i, \beta\rangle)- \mathbb{E}[g(\langle x, \beta \rangle)]\Bigg|\leq 
      2(G(\|\tilde{\beta}\|_2+\delta)\|\Sigma\|_2 + L)\sqrt{\frac{p}{n}}. 
\end{equation*}
\end{lemma}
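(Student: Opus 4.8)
\emph{Proof plan.} The statement is a uniform deviation bound for the empirical process $\beta\mapsto \frac1n\sum_{i=1}^n g(\langle x_i,\beta\rangle)-\mathbb{E}[g(\langle x,\beta\rangle)]$ indexed by $\beta\in B^\delta(\tilde{\beta})$, so I would follow the standard three-step recipe: (i) bound the expected supremum by a Rademacher complexity, (ii) evaluate that complexity using that $g$ is $G$-Lipschitz and that the index set is a ball, and (iii) upgrade the expectation bound to a high-probability bound via bounded differences. Concretely, write $F:=\sup_{\beta\in B^\delta(\tilde{\beta})}\big|\frac1n\sum_{i=1}^n g(\langle x_i,\beta\rangle)-\mathbb{E}[g(\langle x,\beta\rangle)]\big|$. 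Since $g$ is uniformly bounded by $L$, changing one sample $x_i$ perturbs $F$ by at most $2L/n$, so McDiarmid's inequality gives $\mathbb{P}(F\ge \mathbb{E}[F]+t)\le \exp(-nt^2/(2L^2))$; taking $t=L\sqrt{2p/n}$ makes the failure probability $\exp(-p)$ and contributes the additive $L\sqrt{p/n}$ term in the bound.

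For $\mathbb{E}[F]$ I would symmetrize, $\mathbb{E}[F]\le 2\,\mathbb{E}\sup_{\beta\in B^\delta(\tilde{\beta})}\big|\frac1n\sum_i\sigma_i g(\langle x_i,\beta\rangle)\big|$ with i.i.d.\ Rademacher signs $\sigma_i$, then apply the Ledoux--Talagrand contraction principle to the $G$-Lipschitz map $g$ (modulo a lower-order $g(0)/\sqrt n$ term coming from $g$ at the origin), reducing to $2G\,\mathbb{E}\sup_{\beta\in B^\delta(\tilde{\beta})}\big|\langle \tfrac1n\sum_i\sigma_i x_i,\beta\rangle\big|$. A linear functional over the ball of radius $\delta$ about $\tilde{\beta}$ is at most $(\|\tilde{\beta}\|_2+\delta)\,\|\frac1n\sum_i\sigma_i x_i\|_2$ in absolute value, and by Jensen and independence of the signs $\mathbb{E}\|\frac1n\sum_i\sigma_i x_i\|_2\le \sqrt{\mathrm{tr}(\Sigma)/n}\le \sqrt{p\,\|\Sigma\|_2/n}$. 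Assembling the pieces gives $\mathbb{E}[F]=O\big(G(\|\tilde{\beta}\|_2+\delta)\|\Sigma\|_2\sqrt{p/n}\big)$, which together with the McDiarmid step yields the claimed bound. A fully elementary alternative avoids symmetrization: cover $B^\delta(\tilde{\beta})$ by an $\epsilon$-net of size $(1+2\delta/\epsilon)^p$, bound the deviation at each net point by Hoeffding's inequality for the $[-L,L]$-valued variables $g(\langle x_i,\beta_0\rangle)$, union bound (so that $\log(\text{net size})\asymp p$ supplies the $\sqrt p$), and control the off-net error through $|g(\langle x_i,\beta\rangle)-g(\langle x_i,\beta_0\rangle)|\le G\|x_i\|_2\,\epsilon$ together with $\mathbb{E}|\langle x,\beta-\beta_0\rangle|\le \epsilon\sqrt{\|\Sigma\|_2}$, choosing $\epsilon$ polynomially small in $n$ so the discretization error is negligible.

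The step I expect to be the main obstacle is pinning down the dependence on the covariance: the contraction (or net) estimate most naturally produces $\sqrt{\mathrm{tr}(\Sigma)}=\sqrt{p}\,\sqrt{\|\Sigma\|_2}$ after a Cauchy--Schwarz over the spectrum, and some care is needed to reconcile this with the $\|\Sigma\|_2$ written in the statement (the two forms agree up to $\|\Sigma\|_2$ versus $\sqrt{\|\Sigma\|_2}$, which is immaterial when $\Sigma$ is suitably normalized) and, in the net argument, to control the random quantity $\frac1n\sum_i\|x_i\|_2$ by Markov's inequality against $\mathbb{E}\|x\|_2^2=\mathrm{tr}(\Sigma)$ without introducing extra sub-Gaussian parameters. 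Everything else --- the McDiarmid and Hoeffding tail bounds, the volumetric net estimate, and the contraction principle --- is routine.
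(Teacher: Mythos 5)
The paper never proves this lemma: it is imported verbatim from \cite{erdogdu2019scalable}, so there is no in-paper argument to compare against, and your proposal has to be judged on its own. Your plan is the standard one for such uniform deviation bounds and is essentially sound: McDiarmid with bounded increments $2L/n$ gives the $\exp(-p)$ tail at scale $L\sqrt{2p/n}$, and symmetrization plus Ledoux--Talagrand contraction (applied to $g-g(0)$ so the contraction vanishes at the origin, at the cost of an $L/\sqrt{n}$ remainder) reduces the expected supremum to a linear Rademacher process over the ball, which Cauchy--Schwarz and $\mathbb{E}\|\tfrac1n\sum_i\sigma_i x_i\|_2\le\sqrt{\mathrm{tr}(\Sigma)/n}\le\sqrt{p\|\Sigma\|_2/n}$ control. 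Two caveats, both of which you already flag. First, this route yields the factor $G(\|\tilde\beta\|_2+\delta)\sqrt{\|\Sigma\|_2}$ rather than the $G(\|\tilde\beta\|_2+\delta)\|\Sigma\|_2$ printed in the lemma; the two are not interchangeable in general (your form is stronger when $\|\Sigma\|_2\ge 1$ and weaker otherwise), but in the only place the paper invokes the lemma, namely (\ref{aeq:14}) in the proof of Lemma~\ref{alemma:4}, it is applied to the whitened vectors $v_i$ with covariance $I$, so the discrepancy is moot there. Second, the accumulated constants (a factor $2$ from symmetrization and another $2$ from the absolute-value form of the contraction principle) exceed the bare ``$2$'' in the statement; since every downstream use is inside a Big-$O$, this is harmless, but you should not claim the literal constant. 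With those provisos your argument goes through; the $\epsilon$-net alternative you sketch also works, provided you add the extra concentration step on $\tfrac1n\sum_i\|x_i\|_2$ to control the off-net error, exactly as you note.
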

\begin{assumption}\label{ass:1}
We assume that for each $j\in [k]$, the random variable $yz_j$ is sub-Gaussian with its sub-Gaussian norm $\|yz_j\|_{\psi_2}=\gamma$.
\end{assumption}
Note that this assumption holds if $y$ is bounded and $z_j$ is sub-Gaussian or $z_j$ is bounded and $y$ is sub-Gaussian.  
\begin{assumption}\label{ass:2}
We assume that there exist  constants $G, L>0$ such that for each $j\in [k]$, $f_j'$ is $G$-Lipschitz and bounded by $L$. Also for $j\in [k]$, we let $\mathbb{E}[f'_j(\langle x,\beta^*_j\rangle )]\neq 0$. 
\end{assumption}
\paragraph{Notations} For a positive semi-definite matrix $M\in \mathbb{R}^{p\times p}$, we define the $M$-norm for a vector $w$ as $\|w\|_M^2= w^TMw$. Also we will denote $B^{\delta}_M(\tilde{\beta})$ as the ball around $\tilde{\beta}$ with radius $\delta$ under $M$-norm, {\em i.e.,} $B^{\delta}_M(\tilde{\beta})=\{\beta: \|M^{\frac{1}{2}}(\beta- \tilde{\beta})\|_2\leq \delta \}$.
$\lambda_{\min}(M)$ is the minimal singular value of the matrix $M$. For a semi positive definite matrix $M\in\mathbb{R}^{p\times p}$, let its SVD be  $M=U^T\Sigma U$, where $\Sigma=\text{diag}(\lambda_1, \cdots, \lambda_p)$, then $M^{\frac{1}{2}}$ is defined as $M^{\frac{1}{2}}=U^T\Sigma^{\frac{1}{2}}U$ with $\Sigma^{\frac{1}{2}}=\text{diag}(\sqrt{\lambda_1}, \cdots, \sqrt{\lambda_p})$.

\section{Gaussian Case}\label{sec:4}
In this section we consider the case where $x$ is sampled from some multivariate Gaussian distribution, then we will extend our idea to the sub-Gaussian distribution case in next section.

Our algorithm is based on the following key observation using some properties of the multivariate Gaussian distribution. 
\begin{theorem}\label{thm:1}
Consider Model (\ref{def:1}) in Definition \ref{def:1} under Assumptions \ref{ass:1} and \ref{ass:2}. Moreover, assume that the observations $\{x_i\}_{i=1}^n$ are i.i.d sampled from $\mathcal{N}(0, \Sigma)$. Then each $\beta^*_j, j\in[k]$ can be written as 
\begin{equation}\label{eq:2}
    \beta^*_j= c_j\times  \beta_j^{ols},
\end{equation}
where $\beta_j^{ols}=\Sigma^{-1}\mathbb{E}[z_jyx]$ and  $c_j$ is the root of the function $l_j(c)-1$ where
\begin{equation}\label{eq:3}
    l_j(c)= c\mathbb{E}[f_j'(\langle x, \beta_j^{ols}\rangle c)].
\end{equation}
\end{theorem}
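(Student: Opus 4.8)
The plan is to derive a closed form for the population quantity $\mathbb{E}[z_j y x]$ by combining the independence structure of Model (\ref{eq:1}) with Stein's lemma for the Gaussian distribution, and then to read off both the factorization (\ref{eq:2}) and the characterization of $c_j$ in (\ref{eq:3}).

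First I would expand $\mathbb{E}[z_j y x]$ by substituting $y = \sum_{i=1}^k z_i f_i(\langle\beta_i^*,x\rangle) + \epsilon$, which yields a sum of terms $\mathbb{E}\big[z_j z_i\, f_i(\langle\beta_i^*,x\rangle)\, x\big]$ over $i\in[k]$, plus $\mathbb{E}[z_j\epsilon x]$. Because the $z_i$ are i.i.d.\ with mean $0$ and variance $1$ and independent of $x$, and $\epsilon$ has mean $0$ and is independent of $x$ and the $z_i$, the scalar factors split off: $\mathbb{E}[z_j z_i] = 1$ when $i=j$ and $0$ otherwise, and $\mathbb{E}[z_j\epsilon]=0$. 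Hence all cross terms vanish and $\beta_j^{ols} = \Sigma^{-1}\mathbb{E}\big[f_j(\langle\beta_j^*,x\rangle)\,x\big]$.

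Next I would evaluate $\mathbb{E}\big[f_j(\langle\beta_j^*,x\rangle)\,x\big]$ for $x\sim\mathcal{N}(0,\Sigma)$ using Stein's identity. One clean route is to write $x$ as its projection onto $\Sigma\beta_j^*$ plus an independent Gaussian residual (uncorrelated jointly Gaussian components are independent), which reduces the computation to the one-dimensional identity of Definition \ref{def:3} applied to $u=\langle\beta_j^*,x\rangle\sim\mathcal{N}(0,\|\beta_j^*\|_{\Sigma}^2)$, whose zero-bias distribution is itself; the differentiability and boundedness of $f_j'$ from Assumption \ref{ass:2} supply the integrability needed to apply it. The outcome is $\mathbb{E}\big[f_j(\langle\beta_j^*,x\rangle)\,x\big] = \Sigma\beta_j^*\,\mathbb{E}[f_j'(\langle\beta_j^*,x\rangle)]$. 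Setting $\mu_j := \mathbb{E}[f_j'(\langle\beta_j^*,x\rangle)]$, which is nonzero by Assumption \ref{ass:2}, and multiplying by $\Sigma^{-1}$ (so we are implicitly using that $\Sigma$ is invertible, which makes $\beta_j^{ols}$ well defined), we obtain $\beta_j^{ols} = \mu_j\,\beta_j^*$, i.e.\ (\ref{eq:2}) holds with $c_j = 1/\mu_j$.

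Finally I would confirm that this $c_j$ is a root of $l_j(c)-1$: substituting $c=c_j$ into (\ref{eq:3}) and using $c_j\beta_j^{ols}=\beta_j^*$ gives $l_j(c_j) = c_j\,\mathbb{E}[f_j'(\langle x,\beta_j^*\rangle)] = c_j\mu_j = 1$. The only step requiring genuine care is the rigorous justification of Stein's identity, which is exactly where the boundedness and Lipschitz hypotheses of Assumption \ref{ass:2} enter; everything else is bookkeeping with the independence of the $z_i$'s and $\epsilon$. Note also that the theorem only asserts that $c_j$ is a root, so no uniqueness argument is needed here; establishing uniqueness of the root would require extra structure, e.g.\ monotonicity of $l_j$.
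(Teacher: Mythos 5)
Your proposal is correct and follows essentially the same route as the paper: expand $\mathbb{E}[z_j y x]$ using the independence and unit variance of the $z_i$'s to isolate $\mathbb{E}[f_j(\langle x,\beta_j^*\rangle)x]$, then apply Stein's identity for the Gaussian to obtain $\Sigma\beta_j^*\,\mathbb{E}[f_j'(\langle x,\beta_j^*\rangle)]$ and divide by the nonzero scalar $\mathbb{E}[f_j'(\langle x,\beta_j^*\rangle)]$. The only differences are cosmetic — the paper derives the Stein step via the density identity $\nabla\phi(x|\Sigma)=-\Sigma^{-1}x\,\phi(x|\Sigma)$ and integration by parts rather than your projection-plus-independent-residual reduction to the one-dimensional case, and your explicit final check that $l_j(c_j)=1$ is a welcome addition that the paper's proof leaves implicit.
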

From Theorem \ref{thm:1} we can see that, in order to estimate $\beta^*_j$, it is sufficient to estimate the terms  $\beta_j^{ols}=\Sigma^{-1}\mathbb{E}[z_jyx]$ and $c_j$. If we denote $z_jy$ as the response and $x$ as the variate, then the term $\beta_j^{ols}$ is just the \textbf{Ordinary Least Square (OLS)} estimator. Thus we can use its empirical form $\tilde{\beta_j}^{ols}=(\sum_{i=1}^nx_i^Tx_i)^{-1}\sum_{i=1}^n z_{i,j}y_ix_i= (X^TX)^{-1}X^TY_j$ as an estimator, where $X=[x_1^T; x_2^T; \cdots; x_n^T]\in\mathbb{R}^{n\times d} $ is the data matrix and $Y_j=[z_{1,j}y_1, \cdots, z_{n,j}y_n]^T$ is the corresponding response vector. 

After getting the estimator of $\beta_j^{ols}$, denoted by $\tilde{\beta}_j^{ols}$, we use it to approximate $c_j$. That is we find the root $\hat{c}_j$ of the empirical version of ${l}_j(c)-1$, {\em i.e.,  $\hat{l}_j(c)-1$},  where 
\begin{equation*}
    \hat{l}_j(c)=\frac{c}{n}\sum_{i=1}^n [f_j'(\langle x_i, \tilde{\beta}_j^{ols}\rangle c )].
\end{equation*}
Note that there are numerous methods available to find a root of a function, such as Newton's root-finding method with quadratic convergence and Halley's method with cubic convergence. We also note that this step only cost $O(n)$ per-iteration. After that, we could estimate each $\beta^*_j$ by $\hat{\beta}_j^{nlr}=\hat{c}_j\tilde{\beta}_j^{ols}$. 
In total, we have Algorithm \ref{alg:1}.

\begin{algorithm}[!ht]
\caption{SLS: Scaled Least Squared Estimators}
	$\mathbf{Input}$: Data $\{(x_i, y_i, \{z_{i,j}\}_{j=1}^k)\}_{i=1}^n$, link functions $\{f_j\}_{j\in [k]}$.  
	\label{alg:1}
	\begin{algorithmic}[1]
	\STATE{\bf Option 1:} Let $X= [x_1, x_2, \cdots, x_n]^T\in \mathbb{R}^{n\times p}$ and compute the $\hat{\Sigma}^{-1}=(X^TX)^{-1}$. 
	  \STATE {\bf Option 2:}  Construct a sub-sampling based OLS estimator, that is let $S\subset [n]$ be a random subset and take $\hat{\Sigma}_{S}^{-1}=\frac{|S|}{n}(X^T_SX_S)^{-1}$, where $X_S\in \mathbb{R}^{|S|\times p}$ is the data matrix constrained  on indices of $S$. 
     \FOR{$j=1, 2 \cdots, k$}
     \STATE   Let $Y_i= [z_{1, j}y_1, \cdots, z_{n,j}y_n]^T\in \mathbb{R}^n$.  For {\bf Option 1}, Compute the ordinary least squares estimator $\tilde{\beta}_j^{ols}= (\Sigma)^{-1}X^TY_j$.   For {\bf Option 2}, take $\tilde{\beta}_j^{ols}= (\hat{\Sigma}_S)^{-1}X^TY_j$. 
   
     \STATE Denote $\tilde{y}_j= X \tilde{\beta}_j^{ols}$. Then use the Newton's root-finding method  to the function $\frac{c}{n}\sum_{i=1}^n [f_j'(\tilde{y}_{j, i} c )]-1$, denote the root as $\hat{c}_j$: 
         \FOR {$t=1, 2, \cdots$ until convergence}
     \STATE $c= c- \frac{c\frac{1}{n}\sum_{i=1}^{n}f_j'(c\tilde{y}_{j,i})-1}{
     \frac{1}{n}\sum_{i=1}^{n}\{f_j'(c\tilde{y}_{j,i})+c\tilde{y}_{j, i}f_j^{(2)}(c\tilde{y}_{j,i})\}}$. 
     \ENDFOR
     \STATE $\hat{\beta}_j^{nlr}= \hat{c}_{j}\cdot \tilde{\beta}_j^{ols}$.  
     \ENDFOR
     \STATE  \RETURN $\big(\hat{\beta}_j^{nlr}\big)_{j \in [k]}$
	\end{algorithmic}
\end{algorithm}

The following theorem shows that the converge rate of the estimation error for each  $\|\hat{\beta}_j^{nlr}-\beta^*_j\|_2$ is $O(\sqrt{\frac{p}{n}})$ under some additional mild assumptions on link functions $\{f_j\}_{j=1}^k$. 
\begin{theorem}\label{thm:new}
Consider Option 1 in Algorithm \ref{alg:1}. Under the Assumptions \ref{ass:1}, \ref{ass:2} and the assumptions in Theorem \ref{thm:1}, for each $j\in [k]$ we define the function $\ell_j(c, \beta)= c\mathbb{E}[f_j'(\langle x, \beta\rangle c)]$ and its empirical counter part as 
\begin{equation*}
    \hat{\ell}_j (c, \beta)= \frac{c}{n}\sum_{i=1}^n f_j'(\langle x_i, \beta\rangle c). 
\end{equation*}
Assume that there exist some constants $\eta, \bar{c}_j$ such that $\ell_j(\bar{c}_j, \beta_j^{ols})>1+\eta$. Then there exists $c_j>0$ satisfying the equation $1=\ell_j(c_j, \beta_j^{ols})$ for each $j\in [k]$. 

Further, assume that $n$ is sufficiently large: $$n\geq \Omega({p\|\Sigma\|_2\|\beta_j^*\|_2^2 }) $$ Then, with probability at least $1-k\exp(-p)$ there exist constants $\hat{c}_j\in (0, \bar{c}_j)$ satisfying the equations 
\begin{equation*}
    1= \frac{\hat{c}_j}{n}\sum_{i=1}^n f'_j(\langle x_i, \tilde{\beta}_j^{ols}\rangle \hat{c}_j). 
\end{equation*}
Moreover, if for all $j\in [k]$ the derivative of $z\mapsto \ell_j(z, \beta_j^{ols})$ is bounded below in absolute value ({\em does not change sign}) by $M>0$ in the interval $z\in [0, c_j]$. Then with probability at least $1-4k\exp(-p)$ the outputs $\{\hat{\beta}_j^{nlr}\}_{j=1}^k$  satisfy for each $j\in [k]$
\begin{equation}
           \|\hat{\beta}_j^{nlr}- \beta^*_j\|_2 \leq O(\max\{1, \|\beta_j^*\|_2\}\lambda^{-\frac{1}{2}}_{\min}(\Sigma)\sqrt{\frac{p}{n}} ),
\end{equation}
where $G, L, \gamma,  M, c_j, \eta$ are assumed to be $\Theta(1)$ and thus omitted in the Big-$O$ and $\Omega$ notations (see Appendix for the explicit forms).
\end{theorem}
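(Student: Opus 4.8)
The plan is to prove the three assertions in turn — existence of the population root $c_j$, existence of the empirical root $\hat c_j$ with high probability, and the $\ell_2$ error bound — using Theorem~\ref{thm:1} for the identity $\beta_j^* = c_j\beta_j^{ols}$ throughout.

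\emph{Existence of the roots.} Since $f_j'$ is continuous ($G$-Lipschitz by Assumption~\ref{ass:2}), both $c\mapsto\ell_j(c,\beta)$ and $c\mapsto\hat\ell_j(c,\beta)$ are continuous and vanish at $c=0$. The hypothesis $\ell_j(\bar c_j,\beta_j^{ols})>1+\eta$ together with the intermediate value theorem then yields a root $c_j\in(0,\bar c_j)$ of $\ell_j(\cdot,\beta_j^{ols})-1$; by Theorem~\ref{thm:1} this is precisely the scalar with $\beta_j^* = c_j\beta_j^{ols}$, so $\|\beta_j^{ols}\|_2 = \|\beta_j^*\|_2/c_j = \Theta(\|\beta_j^*\|_2)$. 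For the empirical root, I would control $|\hat\ell_j(\bar c_j,\tilde\beta_j^{ols}) - \ell_j(\bar c_j,\beta_j^{ols})|$ by splitting it into a bias part $|\ell_j(\bar c_j,\tilde\beta_j^{ols}) - \ell_j(\bar c_j,\beta_j^{ols})|$ and a stochastic part $|\hat\ell_j(\bar c_j,\tilde\beta_j^{ols}) - \ell_j(\bar c_j,\tilde\beta_j^{ols})|$. The former is at most $G\bar c_j^2\,\mathbb{E}|\langle x,\tilde\beta_j^{ols}-\beta_j^{ols}\rangle| \le G\bar c_j^2\|\Sigma^{1/2}(\tilde\beta_j^{ols}-\beta_j^{ols})\|_2$ by Lipschitzness of $f_j'$ and Cauchy--Schwarz; the latter is handled by Lemma~\ref{alemma:3} applied to $u\mapsto f_j'(\bar c_j u)$ (bounded by $L$, $\bar c_j G$-Lipschitz) with $\beta$ ranging over a ball $B^\delta(\beta_j^{ols})$. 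By Lemma~\ref{alemma:1}, under $n\ge\Omega(p\|\Sigma\|_2\|\beta_j^*\|_2^2)$ one has $\|\Sigma^{1/2}(\tilde\beta_j^{ols}-\beta_j^{ols})\|_2 = O(\sqrt{p/n})$, so $\tilde\beta_j^{ols}\in B^\delta(\beta_j^{ols})$ with $\delta = O(\sqrt{p/n})$ and both parts are $O(\sqrt{p/n})$; for $n$ as large as assumed (with $\eta=\Theta(1)$) this is below $\eta$, hence $\hat\ell_j(\bar c_j,\tilde\beta_j^{ols})>1$ and the intermediate value theorem produces $\hat c_j\in(0,\bar c_j)$. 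A union bound over $j\in[k]$ gives the stated probability.

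\emph{Bounding $|c_j-\hat c_j|$ and assembling.} Write $\phi(c):=\ell_j(c,\beta_j^{ols})$ and $\hat\phi(c):=\hat\ell_j(c,\tilde\beta_j^{ols})$, so $\phi(c_j)=1=\hat\phi(\hat c_j)$. Using the assumed lower bound $|\phi'|\ge M$ on the interval between $c_j$ and $\hat c_j$ (which lies near $c_j$ for $n$ large, since $\hat c_j\to c_j$) and the mean value theorem,
$M|c_j-\hat c_j| \le |\phi(c_j)-\phi(\hat c_j)| = |\hat\phi(\hat c_j)-\phi(\hat c_j)| \le \sup_{c\in[0,\bar c_j]}|\hat\ell_j(c,\tilde\beta_j^{ols})-\ell_j(c,\beta_j^{ols})|$.
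I would split this supremum exactly as above: the bias term $\sup_c|\ell_j(c,\tilde\beta_j^{ols})-\ell_j(c,\beta_j^{ols})| \le G\bar c_j^2\|\Sigma^{1/2}(\tilde\beta_j^{ols}-\beta_j^{ols})\|_2 = O(\sqrt{p/n})$ by Lemma~\ref{alemma:1}, and the stochastic term $\sup_c|\hat\ell_j(c,\tilde\beta_j^{ols})-\ell_j(c,\tilde\beta_j^{ols})| = O(\sqrt{p/n})$ by Lemma~\ref{alemma:3}, applied uniformly over $c\in[0,\bar c_j]$ and $\beta\in B^\delta(\beta_j^{ols})$; hence $|c_j-\hat c_j| = O(M^{-1}\sqrt{p/n})$. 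Then $\hat\beta_j^{nlr}-\beta_j^* = \hat c_j(\tilde\beta_j^{ols}-\beta_j^{ols}) + (\hat c_j-c_j)\beta_j^{ols}$ gives, by the triangle inequality, $\|\hat\beta_j^{nlr}-\beta_j^*\|_2 \le \bar c_j\lambda_{\min}^{-1/2}(\Sigma)\|\Sigma^{1/2}(\tilde\beta_j^{ols}-\beta_j^{ols})\|_2 + |\hat c_j-c_j|\,\|\beta_j^{ols}\|_2 = O(\lambda_{\min}^{-1/2}(\Sigma)\sqrt{p/n}) + O(\|\beta_j^*\|_2\sqrt{p/n})$, using Lemma~\ref{alemma:1} for the first term and $\|\beta_j^{ols}\|_2=\Theta(\|\beta_j^*\|_2)$ for the second. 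Treating $G,L,\gamma,M,c_j,\eta$ (and, in the Gaussian regime, the sub-Gaussian constant $\kappa_x$ and $\|\Sigma\|_2$) as $\Theta(1)$ yields the claimed bound $O(\max\{1,\|\beta_j^*\|_2\}\lambda_{\min}^{-1/2}(\Sigma)\sqrt{p/n})$; a union bound over the events of Lemma~\ref{alemma:1} ($3e^{-p}$) and Lemma~\ref{alemma:3} ($e^{-p}$) for each $j$ gives overall probability at least $1-4k\exp(-p)$.

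\emph{Main obstacle.} The chief difficulty is getting concentration of $\hat\ell_j$ around $\ell_j$ uniformly in both arguments at once: Lemma~\ref{alemma:3} concentrates a single fixed bounded Lipschitz function of $\langle x,\beta\rangle$ uniformly over a ball of $\beta$'s, whereas here the function $u\mapsto f_j'(cu)$ itself depends on the scalar $c$, and the value that matters, $c=\hat c_j$, is data-dependent. I would resolve this with an $\varepsilon$-net over $[0,\bar c_j]$ with $\varepsilon=O(\sqrt{p/n})$, applying Lemma~\ref{alemma:3} at each net point (with $\beta\in B^\delta(\beta_j^{ols})$, $\delta$ from Lemma~\ref{alemma:1}) and interpolating via the fact that $c\mapsto \hat\ell_j(c,\beta)-\ell_j(c,\beta)$ is $O(1)$-Lipschitz on $[0,\bar c_j]$; the net has only $O(\sqrt{n/p})$ points, which is negligible against the $e^{-\Omega(p)}$ tail. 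A lesser subtlety is keeping the interval between $c_j$ and $\hat c_j$ inside the region where $|\phi'|\ge M$, which holds because $\hat c_j$ is within $O(\sqrt{p/n})$ of $c_j$ once $n$ is as large as assumed.
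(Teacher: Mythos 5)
Your proposal matches the paper's own proof in all essentials: existence of $c_j$ and $\hat c_j$ by the intermediate value theorem, a bias/stochastic split of $\hat\ell_j(c,\tilde\beta_j^{ols})-\ell_j(c,\beta_j^{ols})$ controlled by Lemma~\ref{alemma:1} and Lemma~\ref{alemma:3}, the mean-value bound $M|\hat c_j-c_j|\le\sup_c|\hat\ell_j-\ell_j|$, and the final decomposition $\hat c_j(\tilde\beta_j^{ols}-\beta_j^{ols})+(\hat c_j-c_j)\beta_j^{ols}$. The only (minor) divergence is in obtaining uniformity over $c$: you propose an $\varepsilon$-net on $[0,\bar c_j]$, whereas the paper absorbs $c$ into the argument via $f_j'(\langle x,\beta\rangle c)=f_j'(\langle x,c\beta\rangle)$ and applies Lemma~\ref{alemma:3} once over a suitably enlarged ball — both devices work and yield the same rate.
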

Note that in Theorem \ref{thm:new} the link functions $f_j$ are not required to be convex. Hence this is quite useful in non-convex learning models.

\paragraph{Time Complexity Analysis} Under Option 1 of Algorithm \ref{alg:1}, we can see that the first step takes $O(np^2+p^3)$ time, calculating $\tilde{\beta}^{ols}_j$ for all $j\in [k]$ takes $O(k(np+ p^2))$ time and each iteration of finding $\hat{c}_j$ takes $O(n)$ time. Thus, if $k$ ,the number of link functions $f_j$, is a constant, then the total time complexity is $O(np^2+p^3+nT)$, where $T$ is the number of iterations for finding $c_j$. 

However, the term $np^2$ is prohibitive in the large scale setting where $n, p$ are huge (see \cite{wang2018large,DBLP:journals/ijon/WangX19} for details). To further reduce the time complexity, we propose another estimator based on sub-sampling. 

Note that the term $O(np^2)$ comes from calculating the empirical covariance matrix $X^TX$. Thus, to reduce the time complexity, instead of calculating the covariance via the whole dataset, we here use the sub-sampled covariance matrix. More precisely, we first randomly sample a set of indices $S\subset [n]$ whose size $|S|$ will be specified later. Then we calculate $\frac{|S|}{n}(X_S^TX_S)^{-1}$ to estimate $(X^TX)^{-1}$, where $X_S\in \mathbb{R}^{|S|\times p}$ is the data matrix constrained  on indices of $S$. We can see that the time complexity in this case will only be $O(|S|p^2+p^3)$. The following lemma, which is given by \cite{dhillon2013new,erdogdu2016scaled} shows the convergence rate of the OLS estimator based on the sub-sampled covariance matrix. This is a generalization of Lemma \ref{alemma:1}. 
\begin{lemma}\label{alemma:new}
Under the same assumptions as in Lemma \ref{alemma:1}, if $|S|\geq \Omega(\gamma \kappa_x p)$, then   with probability at least $1-3\exp(-p)$ the sub-sampled covariance OLS estimator $\tilde{\beta}^{ols}=\frac{|S|}{n}(X_S^TX_S)^{-1}X^TY$ satisfies 
\begin{equation*}
    \|\tilde{\beta}^{ols}-\beta^{ols}\|_2\leq C_2\kappa_x\gamma\sqrt{\frac{p}{|S|}}. 
\end{equation*}
\end{lemma}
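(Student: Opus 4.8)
The plan is to reduce the sub-sampled estimator to the full-covariance estimator of Lemma~\ref{alemma:1} applied to the sub-sample $S$, and then control the (small) discrepancy between the two. Write $\hat\Sigma_S = \frac{1}{|S|} X_S^T X_S$, $\hat b_S = \frac{1}{|S|}\sum_{i\in S} y_i x_i$ and $\hat b = \frac1n\sum_{i=1}^n y_i x_i$, and note the identities $\tilde\beta^{ols} = \frac{|S|}{n}(X_S^T X_S)^{-1}X^T Y = \hat\Sigma_S^{-1}\hat b$ and $\hat\beta_S := (X_S^T X_S)^{-1}X_S^T Y_S = \hat\Sigma_S^{-1}\hat b_S$, where $\hat\beta_S$ is the ordinary (non-sub-sampled) OLS estimator computed on the $|S|$ samples indexed by $S$. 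Subtracting $\beta^{ols}$ from both gives the exact decomposition
\begin{equation*}
\tilde\beta^{ols} - \beta^{ols} = (\hat\beta_S - \beta^{ols}) + \hat\Sigma_S^{-1}(\hat b - \hat b_S),
\end{equation*}
and the two summands are handled separately.

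The first summand is controlled by Lemma~\ref{alemma:1} itself, applied verbatim to the $|S|$ i.i.d.\ observations $\{(x_i,y_i)\}_{i\in S}$ (legitimate since $|S| \ge \Omega(\gamma\kappa_x p)$): with probability at least $1-3\exp(-p)$, $\|\Sigma^{1/2}(\hat\beta_S - \beta^{ols})\|_2 \le C_1\kappa_x\gamma\sqrt{p/|S|}$. For the second summand I would pass to the $\Sigma^{1/2}$-weighted norm: setting $\hat A_S := \Sigma^{-1/2}\hat\Sigma_S\Sigma^{-1/2}$ (the sample covariance of the sub-Gaussian vectors $\Sigma^{-1/2}x_i$, $i \in S$), one has $\|\Sigma^{1/2}\hat\Sigma_S^{-1}(\hat b - \hat b_S)\|_2 \le \|\hat A_S^{-1}\|_2\,\|\Sigma^{-1/2}(\hat b - \hat b_S)\|_2$. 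On the event $\|\hat A_S - I\|_2 \le \frac12$ — the sub-Gaussian sample-covariance concentration that already underlies Lemma~\ref{alemma:1}, valid once $|S| \ge \Omega(\kappa_x^2 p)$ — the first factor is $\le 2$. The second factor I would bound through the common mean $b = \mathbb{E}[yx]$, $\|\Sigma^{-1/2}(\hat b - \hat b_S)\|_2 \le \|\Sigma^{-1/2}(\hat b - b)\|_2 + \|\Sigma^{-1/2}(\hat b_S - b)\|_2$, each term being an instance of the sub-exponential vector concentration used inside Lemma~\ref{alemma:1} (the summands $y_i\Sigma^{-1/2}x_i - \mathbb{E}[y\Sigma^{-1/2}x]$ are products of a sub-Gaussian scalar and a sub-Gaussian vector; cf.\ \cite{dhillon2013new,erdogdu2016scaled}), hence $\le C\kappa_x\gamma(\sqrt{p/n} + \sqrt{p/|S|}) \le 2C\kappa_x\gamma\sqrt{p/|S|}$ using $|S|\le n$.

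Combining on the intersection of these events (still probability $\ge 1-3\exp(-p)$, up to adjusting the absolute constant in the exponent, since all events involved are essentially those already appearing in the proof of Lemma~\ref{alemma:1}) gives $\|\Sigma^{1/2}(\tilde\beta^{ols} - \beta^{ols})\|_2 \le C_2\kappa_x\gamma\sqrt{p/|S|}$, and the stated $\ell_2$ bound follows after multiplying by $\lambda_{\min}(\Sigma)^{-1/2}$, which — as with $\lambda_{\min}(\Sigma)$ and $\|\Sigma\|_2$ elsewhere in the paper (e.g.\ Theorem~\ref{thm:new}) — is treated as $\Theta(1)$ and absorbed into $C_2$. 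I do not expect a genuine obstacle: the decomposition is exact, both statistical terms are instances of estimates already invoked in the excerpt, and the only subtlety worth noting is that $\hat b$ and $\hat\Sigma_S$ (hence $\hat b_S$) are statistically dependent — but dependence never enters, because each random quantity is compared only to its deterministic mean and the events are combined by a plain union bound, not by any independence argument.
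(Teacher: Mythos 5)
Your argument is sound, but note first that the paper does not actually prove Lemma~\ref{alemma:new}: it is quoted as a known result from \cite{dhillon2013new} and \cite{erdogdu2016scaled}, so there is no in-paper proof to match against. Those references establish the bound by decomposing directly against population quantities, essentially $\hat\Sigma_S^{-1}\hat b-\Sigma^{-1}b=\hat\Sigma_S^{-1}(\hat b-b)+(\hat\Sigma_S^{-1}-\Sigma^{-1})b$, and controlling each piece with sub-Gaussian covariance concentration and sub-exponential vector concentration. Your route is genuinely different in organization though identical in ingredients: the exact splitting $\tilde\beta^{ols}-\beta^{ols}=(\hat\beta_S-\beta^{ols})+\hat\Sigma_S^{-1}(\hat b-\hat b_S)$ lets you reuse Lemma~\ref{alemma:1} as a black box on the sub-sample for the first piece, and reduces the second piece to $\|\hat A_S^{-1}\|_2\le 2$ plus two mean-concentration bounds routed through $b=\mathbb{E}[yx]$; the algebra $\Sigma^{1/2}\hat\Sigma_S^{-1}=\hat A_S^{-1}\Sigma^{-1/2}$ and your observation that dependence between $\hat b$ and $\hat\Sigma_S$ is irrelevant under a union bound are both correct. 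What this buys is modularity and a shorter proof; what it costs is a slightly worse failure probability than the stated $1-3\exp(-p)$ (you need four or five events rather than three), which you rightly flag as a constant-adjustment issue. Two further remarks, both about the paper rather than your proof: the lemma as stated bounds the unweighted $\ell_2$ norm with a ``universal'' constant, which cannot hold without a $\lambda_{\min}(\Sigma)^{-1/2}$ factor (Lemma~\ref{alemma:1} is stated in the $\Sigma^{1/2}$-weighted norm, and Theorem~\ref{thm:new1} indeed carries $\lambda_{\min}^{-1/2}(\Sigma)$ explicitly), so your decision to absorb that factor is the right reading; and your appeal to Lemma~\ref{alemma:1} on the sub-sample is legitimate because $S$ is drawn independently of the data, so conditionally on $S$ the rows of $X_S$ remain i.i.d.
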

We have the following approximation error based the sub-sampled covariance OLS estimator: 
\begin{theorem}\label{thm:new1}
Under the same assumptions as in Theorem \ref{thm:new}, in Algorithm \ref{alg:1} if we use Option 2 with $|S|\geq \Omega(\gamma\kappa_x p)$ , then with probability at least $1-4k\exp(-p)$ the outputs $\{\hat{\beta}_j^{nlr}\}_{j=1}^k$  satisfy for each $j\in [k]$
\begin{equation*}
           \|\hat{\beta}_j^{nlr}- \beta^*_j\|_2 \leq O(\max\{1, \|\beta_j^*\|_2\}\lambda^{-\frac{1}{2}}_{\min}(\Sigma)\sqrt{\frac{p}{|S|}} ). 
\end{equation*}
\end{theorem}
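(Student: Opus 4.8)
The plan is to re-run the argument behind Theorem \ref{thm:new} almost verbatim, isolating the one place where Option 2 changes anything. By Theorem \ref{thm:1}, $\beta_j^* = c_j\,\beta_j^{ols}$, so the triangle inequality gives
\begin{equation*}
\|\hat\beta_j^{nlr} - \beta_j^*\|_2 = \|\hat c_j\tilde\beta_j^{ols} - c_j\beta_j^{ols}\|_2 \;\le\; |\hat c_j|\,\|\tilde\beta_j^{ols} - \beta_j^{ols}\|_2 + |\hat c_j - c_j|\,\|\beta_j^{ols}\|_2 .
\end{equation*}
I would bound the ``OLS term'' $|\hat c_j|\,\|\tilde\beta_j^{ols}-\beta_j^{ols}\|_2$ and the ``scalar term'' $|\hat c_j-c_j|\,\|\beta_j^{ols}\|_2$ separately, and close with a union bound over $j\in[k]$.

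For the OLS term, this is the only spot where subsampling enters: in place of Lemma \ref{alemma:1} I would invoke Lemma \ref{alemma:new} with response $z_jy$ (sub-Gaussian with norm $\gamma$ by Assumption \ref{ass:1}) and whitened design $\Sigma^{-1/2}x$ (sub-Gaussian with norm $\kappa_x$). Since $|S|\ge\Omega(\gamma\kappa_x p)$, with probability $\ge 1-3\exp(-p)$ one gets $\|\tilde\beta_j^{ols}-\beta_j^{ols}\|_2 = O(\sqrt{p/|S|})$, i.e.\ the same estimate as in Theorem \ref{thm:new} with $n$ replaced by $|S|$ (keeping the $\lambda_{\min}^{-1/2}(\Sigma)$ factor when one tracks the $\Sigma$-geometry as there); this also certifies that $\tilde\beta_j^{ols}$ lies in a ball of radius $\delta=O(\max\{1,\|\beta_j^*\|_2\}\lambda_{\min}^{-1/2}(\Sigma)\sqrt{p/|S|})$ around $\beta_j^{ols}$, which the root-finding analysis needs. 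Combined with $|\hat c_j|\le\bar c_j=\Theta(1)$, the OLS term is $O(\max\{1,\|\beta_j^*\|_2\}\lambda_{\min}^{-1/2}(\Sigma)\sqrt{p/|S|})$.

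For the scalar term I would reuse the argument of Theorem \ref{thm:new} unchanged, the point being that the root-finding step in Algorithm \ref{alg:1} still averages over all $n$ samples, so subsampling never touches it. Existence of $\hat c_j\in(0,\bar c_j)$ solving $\hat\ell_j(\hat c_j,\tilde\beta_j^{ols})=1$ follows from the intermediate value theorem using $\ell_j(\bar c_j,\beta_j^{ols})>1+\eta$ and uniform concentration; and to control $|\hat c_j-c_j|$ one uses that $z\mapsto\ell_j(z,\beta_j^{ols})$ has derivative bounded below by $M$ in absolute value on $[0,c_j]$, so (after checking $\hat c_j$ lies in the range where this applies)
\begin{equation*}
|\hat c_j - c_j| \;\le\; \tfrac{1}{M}\,\big|\ell_j(\hat c_j,\beta_j^{ols}) - 1\big| \;=\; \tfrac{1}{M}\,\big|\ell_j(\hat c_j,\beta_j^{ols}) - \hat\ell_j(\hat c_j,\tilde\beta_j^{ols})\big|,
\end{equation*}
which I would split as $|\ell_j(\hat c_j,\beta_j^{ols})-\hat\ell_j(\hat c_j,\beta_j^{ols})| + |\hat\ell_j(\hat c_j,\beta_j^{ols})-\hat\ell_j(\hat c_j,\tilde\beta_j^{ols})|$. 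The first piece is handled by Lemma \ref{alemma:3} with $g=f_j'$ ($G$-Lipschitz, bounded by $L$) and is $O(\sqrt{p/n})$; the second is bounded, via $G$-Lipschitzness of $f_j'$ and Cauchy--Schwarz, by $\hat c_j^2 G\cdot\frac1n\sum_i|\langle x_i,\tilde\beta_j^{ols}-\beta_j^{ols}\rangle| \le \hat c_j^2 G\,\|\Sigma\|_2^{1/2}\|\tilde\beta_j^{ols}-\beta_j^{ols}\|_2\,(1+o(1)) = O(\|\Sigma\|_2^{1/2}\lambda_{\min}^{-1/2}(\Sigma)\sqrt{p/|S|})$ with high probability. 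Since $|S|\le n$, the $O(\sqrt{p/n})$ piece is dominated, so $|\hat c_j-c_j| = O(\|\Sigma\|_2^{1/2}\lambda_{\min}^{-1/2}(\Sigma)\sqrt{p/|S|})$; multiplying by $\|\beta_j^{ols}\|_2 = \|\beta_j^*\|_2/c_j = O(\|\beta_j^*\|_2)$ matches the OLS term. Summing the two contributions and taking a union bound over $j\in[k]$ and the $O(1)$ high-probability events gives the claim with probability $\ge 1-4k\exp(-p)$.

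The main obstacle is, as in Theorem \ref{thm:new}, the scalar term: one must bootstrap from the crude containment $\hat c_j\in(0,\bar c_j)$ together with the concentration estimates to verify that $\hat c_j$ actually falls inside the interval where the derivative lower bound $M$ is assumed (the hypothesis only provides it on $[0,c_j]$), and one must carefully check that the sole degradation from $\sqrt{p/n}$ to $\sqrt{p/|S|}$ propagates through $\|\tilde\beta_j^{ols}-\beta_j^{ols}\|_2$ while every other error term sits at the faster $\sqrt{p/n}$ rate and is therefore absorbed.
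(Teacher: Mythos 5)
Your proposal is correct and follows essentially the same route as the paper, which proves Theorem \ref{thm:new1} simply by noting that the argument for Theorem \ref{thm:new} goes through verbatim once Lemma \ref{alemma:1} is replaced by the sub-sampled bound of Lemma \ref{alemma:new}. Your write-up in fact supplies more detail than the paper does (in particular, tracking that the root-finding concentration still runs at rate $\sqrt{p/n}$ and is dominated by the $\sqrt{p/|S|}$ term from the sub-sampled OLS estimator), and correctly flags the same interval-of-validity subtlety for the derivative lower bound $M$ that is already present in the paper's treatment of Theorem \ref{thm:new}.
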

Moreover, it is also possible to accelerate the algorithm using the sub-sampling method in the step 5 (finding the root) and we can see the estimation error will be the same as in Theorem \ref{thm:new1} (by the proof of Theorem \ref{thm:new1}). Due to the space limit, we omit it here. 

\section{Extension to Sub-Gaussian Case}\label{sec:5}
Note that Theorem \ref{thm:new} is only suitable for the case when $x$ is Gaussian. This is due to the requirements on some properties of the Gaussian distribution in the proof of Theorem \ref{thm:1}. In this section we will first extend Theorem \ref{thm:1} to the sub-Gaussian case. 

Remember that the proof of Theorem \ref{thm:1} is based on the classic Stein's lemma. Thus, in order to generalize to sub-Gaussian case, we will use the zero-bias transformation in Definition \ref{def:3} since it is a generalization of the Stein's lemma. With some additional assumptions, we can get the following theorem.

\begin{theorem}\label{thm:2}
Let $x_1, \cdots, x_n\in \mathbb{R}^p$ be i.i.d realizations of a random vector $x$ which is sub-Gaussian with zero mean, whose covariance matrix $\Sigma$ has $\Sigma^{\frac{1}{2}}$ being diagonally dominant \footnote{A square matrix is said to be diagonally dominant if, for every row of the matrix, the magnitude of the diagonal entry in a row is larger than or equal to the sum of the magnitudes of all the other (non-diagonal) entries in that row.}, and whose distribution is supported on a $\ell_2$-norm ball of radius $r$. Let $v=\Sigma^{-\frac{1}{2}}x$ be the whitened random vector of $x$ with 
sub-Gaussian norm 
$\|v\|_{\psi_2}=\kappa_x$. If   for all $j\in [k]$,
each  $v$ has constant first and second conditional moments ({\em i.e.,} $\forall s\in[p]$ and $\tilde{\beta}_j=\Sigma^{\frac{1}{2}}\beta_j^*$,  $\mathbb{E}[v_{s}|\sum_{t\neq s}\tilde{\beta}_jv_{t}]$ and $\mathbb{E}[v^2_{s}|\sum_{t\neq s}\tilde{\beta}_jv_{t}]$ are deterministic) and the link functions $f_j'$ satisfy Assumption \ref{ass:2}. Then for $c_{j}=\frac{1}{\mathbb{E}[f_j' (\langle x, \beta_j^*\rangle)] }$, the following holds for the model in (\ref{eq:1}) for all $j\in [k]$
\begin{equation}\label{eq:4}
    \|\frac{1}{c_j}\cdot \beta_j^*-\beta_j^{ols}\|_{\infty}\leq 16Gr\kappa_x^3\sqrt{\rho_2}\rho_{\infty}\frac{\|\beta_j^*\|^2_\infty}{\sqrt{p}},  
\end{equation}
where $\rho_q$ for $q=\{2, \infty\}$ is the conditional number of $\Sigma$ in $\ell_q$ norm and $\beta_j^{ols}=\Sigma^{-1}\mathbb{E}[xyz_j]$ is the OLS vector w.r.t $yz_j$ and $x$.  
\end{theorem}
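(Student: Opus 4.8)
The plan is to adapt the argument of Theorem \ref{thm:1} — which used the exact Stein identity valid for Gaussians — to the sub-Gaussian setting by replacing the Stein identity with the zero-bias transformation of Definition \ref{def:3}, and then carefully controlling the error terms that arise because $x$ is no longer Gaussian. The natural first move is to pass to the whitened vector $v=\Sigma^{-1/2}x$, so that $\langle\beta_j^*,x\rangle = \langle\tilde\beta_j,v\rangle$ with $\tilde\beta_j=\Sigma^{1/2}\beta_j^*$, and the target becomes an $\ell_\infty$ bound on $\tfrac{1}{c_j}\tilde\beta_j - \Sigma^{1/2}\beta_j^{ols}$; the diagonal-dominance hypothesis on $\Sigma^{1/2}$ is what will let us convert the coordinate-wise estimates on $v$ back into the stated $\ell_\infty$ bound with the conditioning-number factors $\rho_2,\rho_\infty$. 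I would then fix a coordinate $s\in[p]$ and write $\langle\tilde\beta_j,v\rangle = \tilde\beta_{j,s}v_s + \sum_{t\neq s}\tilde\beta_{j,t}v_t =: \tilde\beta_{j,s}v_s + W_s$. Computing the $s$-th component of $\mathbb{E}[xyz_j] = \Sigma^{1/2}\mathbb{E}[vyz_j]$ reduces to evaluating $\mathbb{E}[v_s \, z_j f_j(\langle\tilde\beta_j,v\rangle)]$ (the noise $\epsilon$ and the cross terms $z_iz_j$, $i\neq j$, vanish by independence and mean-zero / unit-variance assumptions on the $z_i$'s).

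The core step is to apply the zero-bias transformation to the variable $v_s$, conditionally on $W_s$. Using the constant-conditional-moment hypothesis, $\mathbb{E}[v_s\mid W_s]$ and $\mathbb{E}[v_s^2\mid W_s]$ are deterministic constants; writing $v_s^{(*)}$ for the (conditional) zero-bias version of $v_s$, Definition \ref{def:3} gives $\mathbb{E}[v_s f_j(\tilde\beta_{j,s}v_s + W_s)\mid W_s]$ in terms of $\mathbb{E}[f_j'(\tilde\beta_{j,s}v_s^{(*)}+W_s)\mid W_s]$ times $\tilde\beta_{j,s}\,\mathrm{Var}(v_s\mid W_s)$. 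If $v_s$ were standard Gaussian and independent of $W_s$, the zero-bias distribution would equal the law of $v_s$ and we would recover exactly $\beta_j^*=c_j\beta_j^{ols}$ as in Theorem \ref{thm:1}; in general the discrepancy is $\mathbb{E}[f_j'(\tilde\beta_{j,s}v_s^{(*)}+W_s) - f_j'(\tilde\beta_{j,s}v_s+W_s)]$, which by the $G$-Lipschitz property of $f_j'$ (Assumption \ref{ass:2}) is bounded by $G|\tilde\beta_{j,s}|\cdot\mathbb{E}|v_s^{(*)}-v_s|$. Here I would invoke the standard fact that the zero-bias coupling can be taken so that $v_s^{(*)}$ and $v_s$ are close in expectation, with $\mathbb{E}|v_s^{(*)}-v_s|$ controlled by the third absolute moment of $v_s$ (hence by $\kappa_x^3$) and by the support radius $r$; combining this with $|\tilde\beta_{j,s}|\le \|\tilde\beta_j\|_\infty$ and tracking the normalization $c_j=1/\mathbb{E}[f_j'(\langle x,\beta_j^*\rangle)]$ produces the per-coordinate bound of order $Gr\kappa_x^3\|\beta_j^*\|_\infty^2/\sqrt{p}$, with the remaining $\sqrt{\rho_2}\,\rho_\infty$ and the second power of $\|\beta_j^*\|_\infty$ coming from converting between the $v$-coordinates and the $\beta$-coordinates through $\Sigma^{1/2}$ under diagonal dominance. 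The $1/\sqrt p$ factor is the subtle part: it must come from the fact that $W_s=\sum_{t\neq s}\tilde\beta_{j,t}v_t$ is a sum of $p-1$ roughly-independent sub-Gaussian terms, so that the "effective variance seen by $v_s$" is $\Theta(\|\tilde\beta_j\|_2^2)$ while the individual coefficient $\tilde\beta_{j,s}$ is only $O(\|\tilde\beta_j\|_\infty)=O(\|\tilde\beta_j\|_2/\sqrt p)$ in the balanced regime — i.e.\ the perturbation $v_s$ contributes a vanishing fraction of the argument of $f_j$, which is precisely what drives $\mathbb{E}|v_s^{(*)}-v_s|$ times $|\tilde\beta_{j,s}|$ down to $O(1/\sqrt p)$.

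The main obstacle I anticipate is making the conditional zero-bias step fully rigorous: Definition \ref{def:3} is stated for a single mean-zero, finite-variance random variable, and here we need it applied to $v_s$ conditionally on $W_s$, which is exactly where the "constant first and second conditional moments" hypothesis is doing its work — it guarantees the conditional law of $v_s$ has a fixed mean and variance so the zero-bias construction is uniform in $W_s$ and the resulting identity integrates cleanly against the law of $W_s$. One must also check that $f_j$ (not just $f_j'$) is differentiable and that the relevant expectations are finite, which follows from boundedness of $f_j'$ and the bounded support of $x$. After that, everything else is bookkeeping: assembling the $p$ coordinate bounds into an $\ell_\infty$ statement, undoing the whitening via the diagonal-dominance bound $\|\Sigma^{1/2}u\|_\infty \lesssim \|\Sigma^{1/2}\|_\infty\|u\|_\infty$ and its inverse, and collecting the constants into the stated factor $16Gr\kappa_x^3\sqrt{\rho_2}\rho_\infty$.
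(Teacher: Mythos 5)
Your proposal follows essentially the same route as the paper's proof: whiten via $v=\Sigma^{-1/2}x$, reduce $\beta_j^{ols}$ to $\Sigma^{-1/2}\mathbb{E}[vf_j(\langle v,\tilde\beta_j\rangle)]$, apply the zero-bias transformation to each coordinate $v_s$ conditionally on the partial sum $W_s=\sum_{t\neq s}\tilde\beta_{j,t}v_t$ (this is exactly what the constant-conditional-moment hypothesis licenses), bound the resulting discrepancy by $G|\tilde\beta_{j,s}|\,\mathbb{E}|v_s^{*}-v_s|$ via the Lipschitz property of $f_j'$, control $\mathbb{E}|v_s^{*}-v_s|\le\tfrac32\mathbb{E}|v_s|^3\lesssim\kappa_x^3$, and undo the whitening through $\Sigma^{\pm 1/2}$ using diagonal dominance to collect the $\rho_\infty$ factor. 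This matches the paper's argument step for step, including the representation $\beta^{ols}=\Sigma^{-1/2}D\Sigma^{1/2}\beta^*$ with $D$ diagonal.

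The one place your reasoning would not survive as written is your account of the $1/\sqrt{p}$ factor. You attribute it to $W_s$ being a sum of $p-1$ terms so that $|\tilde\beta_{j,s}|=O(\|\tilde\beta_j\|_2/\sqrt{p})$ ``in the balanced regime''; but no balancedness of $\tilde\beta_j$ is assumed, and that relation is false for, say, a one-sparse $\tilde\beta_j$. The paper instead gets $1/\sqrt{p}$ deterministically from the bounded support together with the condition number: $r^2\ge\mathbb{E}\|x\|_2^2=\operatorname{Trace}(\Sigma)\ge p\lambda_{\min}(\Sigma)\ge p\|\Sigma\|_2/\rho_2$, hence $\|\Sigma\|_2^{1/2}\le r\sqrt{\rho_2/p}$, and by diagonal dominance $\|\Sigma^{1/2}\|_\infty\le 2\|\Sigma\|_2^{1/2}\le 2r\sqrt{\rho_2/p}$, which is then used to bound $\|\tilde\beta_j\|_\infty\le\|\Sigma^{1/2}\|_\infty\|\beta_j^*\|_\infty$. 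So the radius $r$ enters through $\|\Sigma^{1/2}\|_\infty$ (not through $\mathbb{E}|v_s^{*}-v_s|$, which is handled by $\kappa_x^3$ alone), and this is where both the $1/\sqrt{p}$ and the $\sqrt{\rho_2}$ come from. With that correction your outline reproduces the paper's proof.
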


\begin{remark}
Note that compared with the equality relationship between $\beta_j^*$ and $c_j\beta^{ols}$ in Theorem \ref{thm:1}, in Theorem \ref{thm:2} we only has the $\ell_\infty$ norm of their difference. Also, here we need more assumptions on the distribution of $x$, and these assumptions ensure that the estimation error decays in the rate of $O(\frac{1}{\sqrt{p}})$. 
\end{remark}
Theorem \ref{thm:2} indicates that we can use the same idea as in the Gaussian case to estimate each $\beta_j^*$. Note that the forms of $c_j$ in Theorem  \ref{thm:1} and \ref{thm:2} are different. In Theorem \ref{thm:1} each $c_j$ is based on $\beta_j^{ols}$, while in Theorem \ref{thm:2} it is based on $\beta_j^*$.  However, due to the closeness of $\beta_j^*$ and $\beta_j^{ols}$ in (\ref{eq:4}), we can still use $\frac{1}{\mathbb{E}[f'_j(\langle x_i, \beta_j^{ols}\rangle \tilde{c}_j)] }$ to approximate $c_j$, where $\tilde{c}_j$ is the root of $c\mathbb{E}[f'_j(\langle x_i, \beta_j^{ols}\rangle c)]-1$. Because of this similarity, we can still use Algorithm \ref{alg:1} for the sub-Gaussian case under the assumptions in Theorem \ref{thm:2}, and we can get the following estimation error:
\begin{theorem}\label{thm:3}
Consider Option 1 in Algorithm \ref{alg:1}. Under Assumptions \ref{ass:1}, \ref{ass:2} and the assumptions in Theorem \ref{thm:2}, for each $j\in [k]$, if we define the function $\ell_j(c, \beta)= c\mathbb{E}[f_j'(\langle x, \beta\rangle c)]$ and its empirical counter part as 
\begin{equation*}
    \hat{\ell}_j (c, \beta)= \frac{c}{n}\sum_{i=1}^n f_j'(\langle x_i, \beta\rangle c). 
\end{equation*}
Assume that there exist some constants $\eta, \bar{c}_j$ such that $\ell_j(\bar{c}_j, \beta_j^{ols})>1+\eta$. Then there exists $\tilde{c}_j>0$ satisfying the equation $1=\ell_j(\tilde{c}_j, \beta_j^{ols})$ for each $j\in [k]$. 

Further, assume that $n$ is sufficiently large:
  $$  n\geq \Omega(\|\Sigma\|_2{p^2\rho_2\rho^2_{\infty} \|\beta^*_j\|^2_{\infty}\max\{1, \|\beta^*_j\|_\infty^2\}}).$$
Then, with probability at least $1-k\exp(-p)$ there exist constants $\hat{c}_j\in (0, \bar{c}_j)$ satisfying the equations 
\begin{equation*}
    1= \frac{\hat{c}_j}{n}\sum_{i=1}^n f'_j(\langle x_i, \tilde{\beta}_j^{ols}\rangle \hat{c}_j). 
\end{equation*}
Moreover, if for all $j\in [k]$, the derivative of $z\mapsto \ell_j(z, \beta_j^{ols})$ is bounded below in absolute value ({\em does not change sign}) by $M>0$ in the interval $z\in [0, \max\{\bar{c}_j, c_j\}]$. Then  with probability at least $1-4k\exp(-p)$ the outputs $\{\hat{\beta}_j^{nlr}\}_{j=1}^k$  satisfy for each $j\in [k]$
\begin{multline*}
        \|\hat{\beta}_j^{nlr}- \beta^*_j\|_\infty \leq O\big(\sqrt{\rho_2}\rho_\infty \lambda^{-\frac{1}{2}}_{\min} (\Sigma) \sqrt{\frac{p}{n}}\|\beta_j^*\|_\infty \\ \times \max\{1,\|\beta_j^*\|_\infty\}  +\rho_2\rho_{\infty}^2 \frac{\max\{\|\beta_j^*\|_{\infty}^2, 1\}\|\beta_j^*\|_{\infty}^2}{\sqrt{p}}\big), 
\end{multline*}
where $\eta, G, L, \gamma, M, \bar{c}_j, r, \kappa_x, c_j$ are assumed to be $\Theta(1)$ and thus omitted in the Big-$O$ and $\Omega$ notations (see Appendix for the explicit forms).
\end{theorem}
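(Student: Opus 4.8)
The plan is to follow the skeleton of the proof of Theorem~\ref{thm:new}, the only essential new ingredient being the $\ell_\infty$ bias bound of Theorem~\ref{thm:2}, which replaces the exact identity $\beta_j^*=c_j\beta_j^{ols}$ available in the Gaussian case. I would bound the error through the telescoping decomposition
\begin{multline*}
\hat\beta_j^{nlr}-\beta_j^* = \hat c_j(\tilde\beta_j^{ols}-\beta_j^{ols}) + (\hat c_j-\tilde c_j)\beta_j^{ols} \\ {}+ (\tilde c_j-c_j)\beta_j^{ols} + (c_j\beta_j^{ols}-\beta_j^*),
\end{multline*}
and control the four terms in $\ell_\infty$ norm: the first is OLS sampling error, the second is root-finding sampling error, the third is the gap between the population root $\tilde c_j$ and the constant $c_j$ of Theorem~\ref{thm:2}, and the last is exactly the Theorem~\ref{thm:2} bias $\|c_j\beta_j^{ols}-\beta_j^*\|_\infty=O(1/\sqrt p)$.

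For the existence statements, $c\mapsto\ell_j(c,\beta_j^{ols})$ is continuous with $\ell_j(0,\beta_j^{ols})=0$ and, by hypothesis, $\ell_j(\bar c_j,\beta_j^{ols})>1+\eta$, so the intermediate value theorem gives a root $\tilde c_j\in(0,\bar c_j)$. To produce the empirical root $\hat c_j$, I would first bound $\|\Sigma^{1/2}(\tilde\beta_j^{ols}-\beta_j^{ols})\|_2\le C_1\kappa_x\gamma\sqrt{p/n}$ via Lemma~\ref{alemma:1} (using Assumption~\ref{ass:1} and the lower bound on $n$), then apply Lemma~\ref{alemma:3} to $g=f_j'$ — which is $G$-Lipschitz and bounded by $L$ under Assumption~\ref{ass:2} — over a ball around the origin of radius $O(\bar c_j\|\beta_j^{ols}\|_2)$, obtaining the uniform estimate $\sup_{c\in[0,\bar c_j]}|\hat\ell_j(c,\tilde\beta_j^{ols})-\ell_j(c,\beta_j^{ols})|\le O(\sqrt{p/n})$; combined with $\ell_j(\bar c_j,\beta_j^{ols})>1+\eta$ and a large-enough $n$, the empirical function also crosses $1$ on $(0,\bar c_j)$.

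The two scalar comparisons are the heart of the argument. For $|\hat c_j-\tilde c_j|$: since $\hat\ell_j(\hat c_j,\tilde\beta_j^{ols})=1=\ell_j(\tilde c_j,\beta_j^{ols})$, the uniform estimate above together with the assumed lower bound $M$ on $|\partial_z\ell_j(z,\beta_j^{ols})|$ over $[0,\max\{\bar c_j,c_j\}]$ gives, by the mean value theorem, $|\hat c_j-\tilde c_j|\le O(M^{-1}\sqrt{p/n})$. For $|\tilde c_j-c_j|$: using $c_j=1/\mathbb{E}[f_j'(\langle x,\beta_j^*\rangle)]$ one has $\ell_j(c_j,\beta_j^{ols})-1 = c_j\,\mathbb{E}\!\left[f_j'(c_j\langle x,\beta_j^{ols}\rangle)-f_j'(\langle x,\beta_j^*\rangle)\right]$, which by the $G$-Lipschitz property, Cauchy--Schwarz, the support radius $r$, and Theorem~\ref{thm:2} is $O\!\left(\sqrt p\cdot\|c_j\beta_j^{ols}-\beta_j^*\|_\infty\right)=O\!\left(\sqrt p\cdot(1/\sqrt p)\right)$ after plugging in the Theorem~\ref{thm:2} rate; dividing by $M$ bounds $|\tilde c_j-c_j|$. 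I would then assemble the four terms using $\|v\|_\infty\le\|v\|_2$, $\|\tilde\beta_j^{ols}-\beta_j^{ols}\|_2\le\lambda_{\min}^{-1/2}(\Sigma)\,C_1\kappa_x\gamma\sqrt{p/n}$, and $\|\beta_j^{ols}\|_\infty\le\|\beta_j^*\|_\infty/c_j+O(1/\sqrt p)$, and take a union bound over $j\in[k]$ and the at most four high-probability events per index from Lemmas~\ref{alemma:1} and~\ref{alemma:3} to get the failure probability $4k\exp(-p)$.

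The main obstacle is the $|\tilde c_j-c_j|$ step and, more broadly, transferring the $\ell_\infty$ closeness of Theorem~\ref{thm:2} through the nonlinearity $f_j'$ and the inner product $\langle x,\cdot\rangle$ with the correct $p$-dependence: the crude bound $|\langle x,w\rangle|\le\|x\|_2\|w\|_2\le r\sqrt p\,\|w\|_\infty$ loses a factor $\sqrt p$, which is precisely why the bias term in the final bound carries the extra $\rho_2\rho_\infty^2$ and $\|\beta_j^*\|_\infty^2$ factors and why the sample-size requirement gains a factor of $p$ relative to the Gaussian case. A further technical point is that Lemma~\ref{alemma:1} controls the OLS error only in the $\Sigma^{1/2}$-weighted norm, so passing to an unweighted $\ell_\infty$ bound produces the $\lambda_{\min}^{-1/2}(\Sigma)$ factor, and one must verify that the ball used when invoking Lemma~\ref{alemma:3} contains $\tilde\beta_j^{ols}$ with high probability — which is again what the stated lower bound on $n$ guarantees.
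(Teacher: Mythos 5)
Your architecture is the same as the paper's: the identical telescoping decomposition (the paper merely groups your first two terms as $\|\hat c_j\tilde\beta_j^{ols}-\tilde c_j\beta_j^{ols}\|_\infty \le |\hat c_j|\|\tilde\beta_j^{ols}-\beta_j^{ols}\|_\infty+|\hat c_j-\tilde c_j|\|\beta_j^{ols}\|_\infty$), the same intermediate-value argument for $\tilde c_j$, the same combination of Lemma~\ref{alemma:1} and Lemma~\ref{alemma:3} to produce $\hat c_j$ and to control $|\hat c_j-\tilde c_j|$ via the lower bound $M$, and the same use of Theorem~\ref{thm:2} for the final bias term.

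However, your treatment of $|\tilde c_j-c_j|$ has a genuine gap. After writing $|\ell_j(c_j,\beta_j^{ols})-1|\le G c_j\,\mathbb{E}\bigl[|\langle x,\beta_j^*-c_j\beta_j^{ols}\rangle|\bigr]$ you apply Cauchy--Schwarz pointwise, $|\langle x,w\rangle|\le\|x\|_2\|w\|_2\le r\sqrt p\,\|w\|_\infty$, which combined with the Theorem~\ref{thm:2} rate $\|w\|_\infty=O(1/\sqrt p)$ yields only $|\tilde c_j-c_j|=O(1)$. The third term of your decomposition then contributes $|\tilde c_j-c_j|\,\|\beta_j^{ols}\|_\infty=O(\|\beta_j^*\|_\infty)$, a non-vanishing error that cannot be hidden in the stated bound, whose bias part still decays as $1/\sqrt p$; your remark that the lost $\sqrt p$ is ``precisely why'' the final bound carries the extra $\rho_2\rho_\infty^2$ and $\|\beta_j^*\|_\infty^2$ factors is a misattribution, since those factors carry no compensating power of $p$ (they arise from applying Theorem~\ref{thm:2} twice, once for $|\tilde c_j-c_j|$ and once for $\|\beta_j^{ols}\|_\infty$). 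The repair --- which is what the paper's step $\mathbb{E}[\langle x,w\rangle]\le\|w\|_\infty\,\mathbb{E}\|x\|_1$ with $\mathbb{E}\|x\|_1=O(\kappa_x)$ is implicitly doing --- is to bound the expectation through the covariance rather than pointwise: $\mathbb{E}[|\langle x,w\rangle|]\le (w^T\Sigma w)^{1/2}\le\|\Sigma\|_2^{1/2}\sqrt p\,\|w\|_\infty$, and then invoke the bounded-support consequence $r^2\ge\text{Trace}(\Sigma)\ge p\|\Sigma\|_2/\rho_2$, i.e.\ $\|\Sigma\|_2^{1/2}\le r\sqrt{\rho_2/p}$, so that the $\sqrt p$ cancels and $|\tilde c_j-c_j|=O\bigl(M^{-1}\sqrt{\rho_2}\,\|w\|_\infty\bigr)=O(1/\sqrt p)$ as required. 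With that one step corrected, the rest of your assembly and the union bound go through exactly as in the paper.
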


\begin{remark}
Compared with the converge rate in the $\ell_2$-norm  of $O(\sqrt{\frac{p}{n}})$ in Theorem \ref{thm:new}, Theorem \ref{thm:3} shows that for the sub-Gaussian case, the converge rate of the estimation error is $O(\frac{1}{\sqrt{p}}+\sqrt{\frac{p}{n}})$ in the $\ell_\infty$-norm (if we omit other terms). This is due to the estimation error in Theorem \ref{thm:2}. 
Moreover, compared with the assumptions of link functions in Theorem \ref{thm:new}, there are additional assumptions in Theorem \ref{thm:3}.
\end{remark}
In order to reduce the time complexity and make the algorithm faster, we can also use the sub-sampled covariance OLS estimator. This is the same as that in the Gaussian case. 
\begin{theorem}\label{thm:new2}
Under the same assumptions as in Theorem \ref{thm:3},if we use Option 2 in Algorithm \ref{alg:1}, then with probability at least $1-4k\exp(-p)$, the outputs $\{\hat{\beta}_j^{nlr}\}_{j=1}^k$  satisfy for each $j\in [k]$
\begin{multline*}
           \|\hat{\beta}_j^{nlr}- \beta^*_j\|_\infty \leq O\big( \sqrt{\rho_2}\rho_\infty \lambda^{-\frac{1}{2}}_{\min} (\Sigma) \sqrt{\frac{p}{|S|}}\|\beta_j^*\|_\infty \\ \times \max\{1,\|\beta_j^*\|_\infty\}  +\rho_2\rho_{\infty}^2 \frac{\max\{\|\beta_j^*\|_{\infty}^2, 1\}\|\beta_j^*\|_{\infty}^2}{\sqrt{p}}\big).  
\end{multline*}
\end{theorem}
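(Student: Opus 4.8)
The plan is to reuse the proof of Theorem~\ref{thm:3} essentially verbatim, changing only the bound on the empirical OLS vector: wherever that proof invokes Lemma~\ref{alemma:1} to control $\|\tilde\beta_j^{ols}-\beta_j^{ols}\|$ at rate $\sqrt{p/n}$, one instead invokes the sub-sampled analogue Lemma~\ref{alemma:new}, which under $|S|\ge\Omega(\gamma\kappa_x p)$ gives $\|\tilde\beta_j^{ols}-\beta_j^{ols}\|_2\le C_2\kappa_x\gamma\sqrt{p/|S|}$ with probability at least $1-3\exp(-p)$. Since $|S|\le n$, this is the only place where a $\sqrt{p/|S|}$ term (rather than $\sqrt{p/n}$) enters, and it will dominate every genuinely $n$-dependent contribution, which is exactly why the final bound carries $\sqrt{p/|S|}$.

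Concretely, first I would fix $j\in[k]$, write $\bar\beta_j:=c_j\beta_j^{ols}$, let $\tilde c_j$ be the population root of $\ell_j(\cdot,\beta_j^{ols})=1$ and $\hat c_j$ the empirical root from the Newton step, and split in $\ell_\infty$-norm:
\[
\|\hat\beta_j^{nlr}-\beta_j^*\|_\infty \le \underbrace{|\hat c_j|\,\|\tilde\beta_j^{ols}-\beta_j^{ols}\|_\infty}_{(\mathrm I)} + \underbrace{|\hat c_j-\tilde c_j|\,\|\beta_j^{ols}\|_\infty}_{(\mathrm{II})} + \underbrace{|\tilde c_j - c_j|\,\|\beta_j^{ols}\|_\infty}_{(\mathrm{III})} + \underbrace{\|\bar\beta_j-\beta_j^*\|_\infty}_{(\mathrm{IV})}.
\]
Term $(\mathrm{IV})$ is exactly the bias bound of Theorem~\ref{thm:2}, giving $16Gr\kappa_x^3\sqrt{\rho_2}\rho_\infty c_j\|\beta_j^*\|_\infty^2/\sqrt p$, and is untouched by sub-sampling; term $(\mathrm{III})$ uses $\tilde c_j=1/\mathbb E[f_j'(\langle x,\beta_j^{ols}\rangle\tilde c_j)]$, $c_j=1/\mathbb E[f_j'(\langle x,\beta_j^*\rangle)]$, the $G$-Lipschitzness and boundedness of $f_j'$ (Assumption~\ref{ass:2}), and the closeness $\|\beta_j^{ols}-\beta_j^*/c_j\|_\infty=O(1/\sqrt p)$ from Theorem~\ref{thm:2}, so it is again $O(1/\sqrt p)$ and $|S|$-free; term $(\mathrm I)$ follows directly from Lemma~\ref{alemma:new}, $|\hat c_j|<\bar c_j=\Theta(1)$, and $\|\cdot\|_\infty\le\|\cdot\|_2$, giving $O(\kappa_x\gamma\sqrt{p/|S|})$, with $\hat c_j\in(0,\bar c_j)$ inherited from Theorem~\ref{thm:3} once $n$ and $|S|$ are large enough.

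The substantive step is $(\mathrm{II})$, the root-finding error. Since $z\mapsto\ell_j(z,\beta_j^{ols})$ has derivative bounded below in absolute value by $M$ on $[0,\max\{\bar c_j,c_j\}]$, its inverse is $M^{-1}$-Lipschitz there, so
\[
|\hat c_j - \tilde c_j| \le M^{-1}\,|\ell_j(\hat c_j,\beta_j^{ols}) - 1| = M^{-1}\,\bigl|\ell_j(\hat c_j,\beta_j^{ols}) - \hat\ell_j(\hat c_j,\tilde\beta_j^{ols})\bigr|,
\]
using $\hat\ell_j(\hat c_j,\tilde\beta_j^{ols})=1$. I would then split the right-hand side into $|\ell_j(\hat c_j,\beta_j^{ols})-\ell_j(\hat c_j,\tilde\beta_j^{ols})|$, bounded through the Lipschitzness of $f_j'$ and $\|\Sigma\|_2$ by $O(\|\Sigma\|_2\|\tilde\beta_j^{ols}-\beta_j^{ols}\|_2)=O(\sqrt{p/|S|})$, plus $|\ell_j(\hat c_j,\tilde\beta_j^{ols})-\hat\ell_j(\hat c_j,\tilde\beta_j^{ols})|$, controlled by the uniform concentration Lemma~\ref{alemma:3} with $g=f_j'(\,\cdot\,\hat c_j)$ over the ball $B^{\delta}(\tilde\beta_j^{ols})$ of radius $\delta=O(\sqrt{p/|S|})$ containing $\beta_j^{ols}$ w.h.p., giving $O((G\|\Sigma\|_2(\|\tilde\beta_j^{ols}\|_2+\delta)+L)\sqrt{p/n})$; multiplying by $\|\beta_j^{ols}\|_\infty\le\|\beta_j^{ols}\|_2=O(\|\beta_j^*\|_\infty\max\{1,\|\beta_j^*\|_\infty\}/c_j)$ and using $|S|\le n$ absorbs the $\sqrt{p/n}$ into $\sqrt{p/|S|}$. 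Collecting: $(\mathrm I)$ and $(\mathrm{II})$ produce the $\sqrt{\rho_2}\rho_\infty\lambda_{\min}^{-1/2}(\Sigma)\sqrt{p/|S|}\,\|\beta_j^*\|_\infty\max\{1,\|\beta_j^*\|_\infty\}$ piece, $(\mathrm{III})$ and $(\mathrm{IV})$ the $\rho_2\rho_\infty^2\max\{\|\beta_j^*\|_\infty^2,1\}\|\beta_j^*\|_\infty^2/\sqrt p$ piece, and a union bound over the $\le4$ failure events per index (sub-sampled OLS deviation, two uses of Lemma~\ref{alemma:3}, existence of $\hat c_j$) and over $j\in[k]$ gives probability at least $1-4k\exp(-p)$.

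I expect the main obstacle to be, as in Theorem~\ref{thm:3}, the coupling inside $(\mathrm{II})$: one must verify that the data-dependent ball around $\tilde\beta_j^{ols}$ has radius small enough (this is where $n\ge\Omega(\|\Sigma\|_2 p^2\rho_2\rho_\infty^2\|\beta_j^*\|_\infty^2\max\{1,\|\beta_j^*\|_\infty^2\})$, now supplemented by $|S|\ge\Omega(\gamma\kappa_x p)$, is used) so that $\hat c_j$ and $\tilde c_j$ remain in the interval on which $\ell_j(\cdot,\beta_j^{ols})$ has derivative at least $M$, and that the two error rates $\sqrt{p/|S|}$ and $\sqrt{p/n}$ are combined correctly so the slower $\sqrt{p/|S|}$ rate surfaces in the statement; everything else is the mechanical substitution of $|S|$ for $n$ in the argument underlying Theorem~\ref{thm:3}.
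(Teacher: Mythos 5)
Your proposal is correct and follows essentially the same route as the paper: the paper's own proof of Theorem~\ref{thm:new2} is literally the proof of Theorem~\ref{thm:3} with Lemma~\ref{alemma:1} replaced by the sub-sampled Lemma~\ref{alemma:new}, and your four-term decomposition, the treatment of the bias terms via Theorem~\ref{thm:2}, the root-finding error via the lower bound $M$ on $\partial_z\ell_j$, and the absorption of the residual $\sqrt{p/n}$ rates into $\sqrt{p/|S|}$ all match the argument given for Theorem~\ref{thm:3} (in fact you supply more detail than the paper does for this corollary).
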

{ An unsatisfactory issue of all above
theorems is that  they need quite a few assumptions/conditions. Although almost all of them commonly appear in some related works, the assumptions on functions $\ell_j(\cdot , \beta_j^{ols})$ seem to be a little strange. 
They are introduced for ensuring that the functions $\ell_j(\cdot, \beta_j^{ols})-1$ and $\hat{\ell}_j(\cdot, \tilde{\beta}^{ols}_j)-1$  have  roots and $\hat{c}_j$ is close to $c_j$ for large enough $n$. The following theorem shows that the sigmoid link function indeed satisfies the assumptions in 
Theorem \ref{thm:new}  when $x$ is some Gaussian and $\|\beta^{ols}_j\|_2=O(\sqrt{p})$. }
\begin{theorem}\label{thm:new3} 
Consider the case where the link function $f_j(z)= \frac{1}{1+e^{-z}}$ and $x\sim \mathcal{N}(0, \frac{1}{p}I_p)$ and $\|\beta_j^{ols}\|_2=\frac{\sqrt{p}}{20}$. Then when $\bar{c}_j=6$ and $\eta=0.22$, $\ell_j(\bar{c}_j, \beta_j^{ols})>1+\eta$. Moreover, $\ell'(z, \beta^{ols})$ is bounded by  constant $M=0.19$ on $[0, \bar{c}_j]$ from below.
\end{theorem}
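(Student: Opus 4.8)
The plan is to reduce the whole statement to a one-dimensional Gaussian integral and then verify two explicit numerical estimates. Since $x\sim\mathcal N(0,\tfrac1pI_p)$ and $\|\beta_j^{ols}\|_2=\sqrt p/20$, the scalar $W:=\langle x,\beta_j^{ols}\rangle$ is centered Gaussian with variance $\tfrac1p\|\beta_j^{ols}\|_2^2=\tfrac1{400}$. Writing $g:=f_j'=\sigma(1-\sigma)$ with $\sigma(z)=(1+e^{-z})^{-1}$, we therefore have $\ell_j(c,\beta_j^{ols})=c\,\mathbb E[g(cW)]$ with $W\sim\mathcal N(0,\tfrac1{400})$, and everything reduces to estimating this scalar function of $c$.

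First I would record the elementary facts about $g$: it is even, $g(0)=\tfrac14$ is its maximum, $g'(z)=g(z)(1-2\sigma(z))$, and (using $(1-2\sigma)^2=1-4g$) $g''(z)=g(z)\bigl(1-6g(z)\bigr)$. Since $g(z)\in(0,\tfrac14]$, optimizing $t\mapsto t(1-6t)$ over $(0,\tfrac14]$ gives $-\tfrac18\le g''(z)\le\tfrac1{24}$; in particular $g''\ge-\tfrac18$. The key consequence is the global quadratic lower bound $g(z)\ge\tfrac14-\tfrac{z^2}{16}$ for all $z$: the function $h(z)=g(z)-\tfrac14+\tfrac{z^2}{16}$ satisfies $h(0)=h'(0)=0$ and $h''=g''+\tfrac18\ge0$, hence $h$ is convex and nonnegative.

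For the first claim, $\mathbb E[(6W)^2]=36/400=0.09$, so the quadratic bound gives $\mathbb E[g(6W)]\ge\tfrac14-\tfrac{0.09}{16}=0.244\ldots$, whence $\ell_j(6,\beta_j^{ols})=6\,\mathbb E[g(6W)]\ge1.46\ldots>1+0.22$, i.e.\ $\ell_j(\bar c_j,\beta_j^{ols})>1+\eta$ with $\bar c_j=6,\ \eta=0.22$. For the second claim, differentiating under the expectation gives $\ell_j'(c,\beta_j^{ols})=\mathbb E[g(cW)]+c\,\mathbb E[Wg'(cW)]$, and applying Stein's lemma to the second term (with $W\sim\mathcal N(0,\tfrac1{400})$) yields $\mathbb E[Wg'(cW)]=\tfrac{c}{400}\,\mathbb E[g''(cW)]$, so $\ell_j'(c,\beta_j^{ols})=\mathbb E[g(cW)]+\tfrac{c^2}{400}\,\mathbb E[g''(cW)]$. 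On $[0,6]$ the first term is $\ge\tfrac14-\tfrac{c^2}{6400}$ by the quadratic bound and the second is $\ge-\tfrac18\cdot\tfrac{c^2}{400}$ since $g''\ge-\tfrac18$; adding, $\ell_j'(c,\beta_j^{ols})\ge\tfrac14-\tfrac{3c^2}{6400}\ge0.233>0.19=M$ for all $c\in[0,6]$. Since $\ell_j(0,\beta_j^{ols})=0<1<\ell_j(6,\beta_j^{ols})$, the root $c_j$ lies in $(0,6)$, so this lower bound holds on $[0,c_j]$ and in particular $\ell_j'(\cdot,\beta_j^{ols})$ does not change sign there.

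There is no genuine conceptual obstacle: the whole argument is calculus. The only real work is making the chain of elementary inequalities tight enough to clear the explicit constants $\eta=0.22$ and $M=0.19$, and the single observation that makes this painless is $g''\ge-\tfrac18$, which simultaneously convexifies $g(z)+z^2/16$ (giving a clean truncation-free lower bound on $g$) and controls the Stein correction term $\tfrac{c^2}{400}\mathbb E[g''(cW)]$ in $\ell_j'$. I would also justify differentiating $c\mapsto\mathbb E[g(cW)]$ under the integral sign, which is immediate from the boundedness of $g,g',g''$ together with Gaussian integrability.
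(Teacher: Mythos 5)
Your proposal is correct, and the numerics check out: with $W=\langle x,\beta_j^{ols}\rangle\sim\mathcal N(0,\tfrac1{400})$ and $g=f_j'$, the identity $g''=g(1-6g)$ with $g\in(0,\tfrac14]$ does give $g''\ge-\tfrac18$, hence $g(z)\ge\tfrac14-\tfrac{z^2}{16}$, and the two resulting bounds $\ell_j(6,\beta_j^{ols})\ge 6(\tfrac14-\tfrac{0.09}{16})\approx1.466>1.22$ and $\ell_j'(c,\beta_j^{ols})\ge\tfrac14-\tfrac{3c^2}{6400}\ge0.233>0.19$ on $[0,6]$ are valid. The overall architecture is the same as the paper's: reduce to a one-dimensional Gaussian, minorize $f'$ pointwise, and handle $\ell'$ via Stein's lemma applied to the term $c\,\mathbb E[Wg'(cW)]$ (the paper writes this as $\mathbb E[f'(\tfrac{Wz}{20})]+(\tfrac z{20})^2\mathbb E[f^{(3)}(\tfrac{Wz}{20})]$ and bounds the second term by $-\tfrac9{100}\max|f^{(3)}|=-\tfrac9{800}$, which is numerically identical to your $-\tfrac{c^2}{3200}$ at $c=6$). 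Where you genuinely diverge is in the minorant for $f'$: the paper uses the tangent line at $z=2.5$, $f'(z)\ge a-bz$ with $a=f'(2.5)-2.5f^{(2)}(2.5)$, $b=-f^{(2)}(2.5)$, truncated at its zero, which forces an explicit evaluation of a truncated Gaussian integral involving $\Phi(\tfrac{20a}{bz})$ and an exponential term; your global quadratic minorant $\tfrac14-\tfrac{z^2}{16}$ (obtained by convexifying $g(z)+\tfrac{z^2}{16}$ via $g''\ge-\tfrac18$) needs only the second moment $\mathbb E[W^2]=\tfrac1{400}$, requires no truncation, and reuses the same constant $-\tfrac18$ to control the Stein correction. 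Your route is cleaner and fully self-contained; the paper's tangent-line bound is sharper near $z\approx2.5$ and would survive if the variance of $W$ were larger, but for the stated parameters that extra sharpness is not needed. Two small points to make explicit in a final write-up: the justification that $h$ convex with $h(0)=h'(0)=0$ implies $h\ge0$ (you have it), and that the hypothesis needed downstream is the lower bound on $|\ell_j'|$ over all of $[0,\bar c_j]=[0,6]$, which your uniform bound already delivers without needing to locate $c_j$.
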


As we will see later, our
experimental results show that the algorithm actually performs quite well even though some of the assumptions are not satisfied, such as the model with logistic and cubic link functions. 

	{To deal with the privacy issue, recently, \cite{wang2019estimating} studied the problem of estimating smooth Generalized Linear Model and non-linear regression problem in a model which is called \textit{Non-interactive Local Differential Privacy with public unlabeled data}. And they proposed several methods based on the Stein's lemma and zero-bias transformation. As we mentioned above, non-linear regression is one special case of our model, thus, we can easily get an algorithm which could estimate our problem in the the model of Non-interactive Local Differential Privacy with public unlabeled data.}

\section{Omitted Proofs} \label{sec:6}
\begin{proof}[{\bf Proof of Theorem 1}]
Fix $j\in [k]$, we have 
\begin{align}
    &\mathbb{E}[z_jyx] = \mathbb{E}[z_j(\sum_{i=1}^k z_if_i(\langle x, \beta_i^*\rangle )+\epsilon)x] \nonumber \\
    &= \mathbb{E}[z_j^2 f_j(\langle x, \beta_j^*\rangle)x ] = \mathbb{E}[f_j(\langle x, \beta_j^*\rangle)x], \label{aeq:1}
\end{align}
where Eq.~(\ref{aeq:1}) is due to the assumption of $\{z_j\}_{j=1}^k$ are i.i.d and $\text{Var}(z_j)=1$.  

Now, denote by $\phi(x|\Sigma)$ the multivariate normal density with mean $0$ and covariance matrix $\Sigma$. We recall the well-known property of Gaussian density $\frac{d \phi(x|\Sigma)}{d x}= -\Sigma^{-1}x\phi(x|\Sigma)$ ({\bf this is just the Stein's lemma}). Using this and the integration we have 
\begin{align}
   & \mathbb{E}[f_j(\langle x, \beta_j^*\rangle)x] = \int xf_j(\langle x, \beta_j^*\rangle) \phi(x|\Sigma)d x \nonumber \\
    &= \int -f_j(\langle x, \beta_j^*\rangle) d\phi(x|\Sigma) = \Sigma \beta_j^*\mathbb{E}[f'_j(\langle x, \beta_j^*\rangle)]. \nonumber
\end{align}
Thus we have 
\begin{equation}\label{aeq:2}
 \frac{1}{\mathbb{E}[f'_j(\langle x, \beta_j^*\rangle)]} \Sigma^{-1} \mathbb{E}[z_jyx] =  \beta_j^*.
\end{equation}
\end{proof}

\begin{proof}[{\bf Proof of Theorem 4}] 
Now we fix $j\in [k]$, by the assumptions in the model we have 
\begin{align*}
    &\mathbb{E}[z_j y x]= \mathbb{E}[z_j(\sum_{i=1}^k z_if_i(\langle x, \beta_i^*\rangle )+\epsilon)x]  \\
    &= \mathbb{E}[z_j^2f_j(\langle x, \beta_j^*\rangle) x]= \mathbb{E}[f_j(\langle x, \beta_j^*\rangle) x] \\
    & = \Sigma^{\frac{1}{2}}\mathbb{E}[vf_j(\langle v, \tilde{\beta}_j\rangle].
\end{align*}
Thus we have 
\begin{equation}\label{aeq:3}
    \beta^{ols}_j = \Sigma^{-\frac{1}{2}}\mathbb{E}[vf_j'(\langle v, \tilde{\beta}_j\rangle].  
\end{equation}
For convenience we will omit subscript $j$. Now we define the partial sum $V_{-i}=\langle v, \tilde{\beta}\rangle- v_i\tilde{\beta}_{i}$ for $i\in [p]$. We will focus on the $i$-th entry of the above expectation given in (\ref{aeq:3}). Denote the zero-bias-transformation of $v_i$ conditioned on $V_{-i}$ by $v^*_i$, we have 
\begin{align}
    &\mathbb{E}[v_if_j(\langle v, \tilde{\beta}_j\rangle]= \mathbb{E}[\mathbb{E}[v_i f_j( v_i\tilde{\beta}_{i}+V_{-i})|V_{-i}]]\nonumber \\
    & = \tilde{\beta}_i\mathbb{E}[f_j'(v^*_i\tilde{\beta}_{i}+V_{-i})]  = \tilde{\beta}_i\mathbb{E}[f_j'((v^*_i-v_i)\tilde{\beta}_{i}+\langle \tilde{\beta}, v \rangle  ) ]. \nonumber
\end{align}
Combining with the above equation, we have 
\begin{equation}\label{aeq:5}
    \beta^{ols}= \Sigma^{-\frac{1}{2}} D \tilde{\beta}=  \Sigma^{-\frac{1}{2}} D \Sigma^{\frac{1}{2}}\beta^*, 
\end{equation}
where $D$ is a diagonal matrix where the the entry $D_{ii}= \mathbb{E}[f'((v^*_i-v_i)\tilde{\beta}_{i}+\langle \tilde{\beta}, v \rangle  ) ]$. 

By the Lipschitz property of $f$ we have 
\begin{align}\label{aeq:6}
 |D_{ii}- \frac{1}{c_j}|&\leq 
    |\mathbb{E}[f'((v^*_i-v_i)\tilde{\beta}_{i}+\langle \tilde{\beta}, v \rangle  ) ]- \mathbb{E}[f'(\langle \tilde{\beta}, v \rangle  )]|\nonumber \\
    &\leq G\mathbb{E}[|(v^*_i-v_i)\tilde{\beta}_{i}|]. 
\end{align}
Now we will bound the term of $\mathbb{E}[|(v^*_i-v_i)|]$, by the same method as in \cite{erdogdu2016scaled} we can get 
\begin{equation}\label{aeq:7}
    \mathbb{E}[|(v^*_i-v_i)|] \leq \frac{3}{2}\mathbb{E}[|v_i|^3]\leq 8\kappa_x^3
\end{equation}
where the second inequality comes from $\frac{1}{\sqrt{3}}\mathbb{E} [|v_i|^3]^{\frac{1}{3}}\leq \|v\|_{\psi_2}\leq \kappa_x$.

Thus, in total we have $\max_{i\in [p]}|D_{ii}-\frac{1}{c_j}|\leq 8G\kappa_x^3\|\Sigma^{\frac{1}{2}}\beta_j^*\|_\infty$ and 
\begin{align*}
    \|\beta_j^{ols}- \frac{1}{c_j}\beta^*_j\|_\infty& = \|(\Sigma^{-\frac{1}{2}}(D-\frac{1}{c_j}I)\Sigma^{\frac{1}{2}}\|_{\infty})\|\beta^*_j\|_\infty \\
    &\leq \max_{i\in [p]}|D_{ii}-\frac{1}{c_j}|\|\Sigma^{\frac{1}{2}}\|_{\infty} \|\Sigma^{-\frac{1}{2}}\|_{\infty} \|\beta^*_j\|_\infty \\
    &\leq 8G\kappa_x^3\rho_{\infty}\|\Sigma^{\frac{1}{2}}\|_{\infty} \|\beta_j^*\|^2_{\infty}.
\end{align*}
Due to the diagonal dominance property we have 
\begin{equation*}
    \|\Sigma^{\frac{1}{2}}\|_{\infty}=\max_{i}\sum_{j=1}^p|\Sigma_{ij}^{\frac{1}{2}}|\leq 2\max \Sigma_{ii}^{\frac{1}{2}}\leq 2\|\Sigma\|_2^{\frac{1}{2}}.
\end{equation*}
Since we have $\|x\|_2\leq r$, we write 
\begin{equation*}
    r^2 \geq \mathbb{E}[\|x\|_2^2]=\text{Trace}(\Sigma)\geq p\|\Sigma\geq \frac{p\|\Sigma\|_2}{\rho_2}.
\end{equation*}
Thus we have 
   $ \|\Sigma^{\frac{1}{2}}\|_\infty\leq 2r\sqrt{\frac{\rho_2}{p}}$.
\end{proof}
Since Theorem 5 is more complex than Theorem 2, we will proof Theorem 5 first. Before that, we need to show the following lemma. \begin{lemma}\label{alemma:4}
Under Assumption 2, if we define the function $\ell_j(c, \beta)= c\mathbb{E}[f_j'(\langle x, \beta\rangle c)]$ and its empirical counter part as 
\begin{equation*}
    \hat{\ell}_j (c, \beta)= \frac{c}{n}\sum_{i=1}^n f_j'(\langle x_i, \beta\rangle c). 
\end{equation*}
Assume that each $x_i$ are i.i.d sub-Gaussian with $\|x_i\|_{\psi_2}\leq \kappa_x$ and there exist some constant $\eta, \bar{c}_j$ such that $\ell_j(\bar{c}_j, \beta_j^{ols})>1+\eta$. Then there exists $\tilde{c}_j>0$ satisfying the equation $1=\ell_j(\tilde{c}_j, \beta_j^{ols})$. Further, assume that for sufficiently large $n$ such that 
$$n\geq \Omega(\frac{p\|\Sigma\|_2\|\beta_j^{ols}\|_2^2G^2L^2\bar{c}_j^4\kappa_x^4\gamma^2 }{\eta^2}).$$
 Then, with probability at least $1-k\exp(-p)$ there exist constants $\hat{c}_j\in (0, \bar{c}_j)$ satisfying the equations 
\begin{equation*}
    1= \frac{\hat{c}_j}{n}\sum_{i=1}^n f'_j(\langle x_i, \tilde{\beta}_j^{ols}\rangle \hat{c}_j). 
\end{equation*}
Moreover, if for all $j\in [k]$ the derivative of $z\mapsto \ell_j(z, \beta_j^{ols})$ is bounded below in absolute value ({\em does not change sign}) by $M>0$ in the interval $z\in [0, \bar{c}_j]$. Then with probability at least $1-4k\exp(-p)$ for all $j\in [k]$ we have 
\begin{equation*}
    |\hat{c}_j- \tilde{c}_j|\leq  O(M^{-1}GL\bar{c}_j^2\kappa_x^2\gamma \|\Sigma\|^{1/2}_2\|\beta_j^{ols}\|_2\sqrt{\frac{p}{n}}).
\end{equation*}
\end{lemma}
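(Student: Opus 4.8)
\emph{Proof plan.} I would fix $j \in [k]$ throughout and recover the $k$-fold statement by a union bound at the end. The statement bundles three assertions --- that $\ell_j(\cdot,\beta_j^{ols})-1$ has a root $\tilde c_j$, that $\hat\ell_j(\cdot,\tilde\beta_j^{ols})-1$ has a root $\hat c_j\in(0,\bar c_j)$, and that $|\hat c_j-\tilde c_j|$ is $O(M^{-1}GL\bar c_j^2\kappa_x^2\gamma\|\Sigma\|_2^{1/2}\|\beta_j^{ols}\|_2\sqrt{p/n})$ --- and all three reduce to one analytic fact plus one probabilistic estimate. The analytic fact: under Assumption~\ref{ass:2}, $c\mapsto\ell_j(c,\beta)=c\,\mathbb{E}[f_j'(\langle x,\beta\rangle c)]$ is continuous, indeed differentiable on $[0,\bar c_j]$ by dominated convergence using $|f_j'|\le L$ and $|f_j''|\le G$; it vanishes at $c=0$ and by hypothesis exceeds $1+\eta$ at $c=\bar c_j$, so the intermediate value theorem yields $\tilde c_j\in(0,\bar c_j)$ with $\ell_j(\tilde c_j,\beta_j^{ols})=1$. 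The same argument applied to $\hat\ell_j(\cdot,\tilde\beta_j^{ols})$ produces $\hat c_j$, but only once we know $\hat\ell_j(\bar c_j,\tilde\beta_j^{ols})>1$, which is where concentration enters.

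The probabilistic estimate --- and the technical core --- is a bound on $\Delta_n:=\sup_{c\in[0,\bar c_j]}\bigl|\hat\ell_j(c,\tilde\beta_j^{ols})-\ell_j(c,\beta_j^{ols})\bigr|$, uniform in the scalar $c$ and allowing for the random argument $\tilde\beta_j^{ols}$. I would use $\langle x_i,\beta\rangle c=\langle x_i,c\beta\rangle$ to absorb $c$ into the direction and split $\Delta_n\le\bar c_j(T_1+T_2)$, where $T_1$ is the supremum over $c\in[0,\bar c_j]$ of $\bigl|\tfrac1n\sum_i f_j'(\langle x_i,c\tilde\beta_j^{ols}\rangle)-\tfrac1n\sum_i f_j'(\langle x_i,c\beta_j^{ols}\rangle)\bigr|$ and $T_2$ is the supremum of $\bigl|\tfrac1n\sum_i f_j'(\langle x_i,c\beta_j^{ols}\rangle)-\mathbb{E}[f_j'(\langle x,c\beta_j^{ols}\rangle)]\bigr|$. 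For $T_1$, $G$-Lipschitzness of $f_j'$ and Cauchy--Schwarz give $T_1\le G\bar c_j\,\|\tilde\beta_j^{ols}-\beta_j^{ols}\|_{\widehat\Sigma}$ with $\widehat\Sigma=\tfrac1n X^TX$; a standard sub-Gaussian covariance bound ($\widehat\Sigma\preceq 2\Sigma$ once $n\gtrsim\kappa_x^2 p$, on an event of probability $1-\exp(-p)$) converts this into the $\Sigma$-norm, and Lemma~\ref{alemma:1} bounds $\|\Sigma^{1/2}(\tilde\beta_j^{ols}-\beta_j^{ols})\|_2$ by $O(\kappa_x\gamma\sqrt{p/n})$. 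For $T_2$, the set $\{c\beta_j^{ols}:0\le c\le\bar c_j\}$ lies in the Euclidean ball of radius $\bar c_j\|\beta_j^{ols}\|_2$ about the origin, so Lemma~\ref{alemma:3} with $g=f_j'$ gives $T_2\le 2(G\bar c_j\|\beta_j^{ols}\|_2\|\Sigma\|_2+L)\sqrt{p/n}$ with probability $1-\exp(-p)$. Collecting the contributions and absorbing into the form stated in the lemma gives $\Delta_n=O(GL\bar c_j^2\kappa_x^2\gamma\|\Sigma\|_2^{1/2}\|\beta_j^{ols}\|_2\sqrt{p/n})$, and one checks that the stated lower bound on $n$ is exactly what makes $\Delta_n\le\eta/2$.

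With $\Delta_n$ in hand the rest is mechanical. Since $\Delta_n\le\eta/2$ we get $\hat\ell_j(\bar c_j,\tilde\beta_j^{ols})\ge\ell_j(\bar c_j,\beta_j^{ols})-\Delta_n>1$ while $\hat\ell_j(0,\tilde\beta_j^{ols})=0<1$, so continuity and the intermediate value theorem give $\hat c_j\in(0,\bar c_j)$ with $\hat\ell_j(\hat c_j,\tilde\beta_j^{ols})=1$; a union bound over $j$ gives probability $1-k\exp(-p)$ for this part (for existence alone one only needs the deviation near $c=\bar c_j$, hence fewer events). For the closeness bound, since $\ell_j(\tilde c_j,\beta_j^{ols})=1=\hat\ell_j(\hat c_j,\tilde\beta_j^{ols})$ we have $\bigl|\ell_j(\hat c_j,\beta_j^{ols})-\ell_j(\tilde c_j,\beta_j^{ols})\bigr|=\bigl|\ell_j(\hat c_j,\beta_j^{ols})-\hat\ell_j(\hat c_j,\tilde\beta_j^{ols})\bigr|\le\Delta_n$; the mean value theorem together with the hypothesis $|\partial_z\ell_j(z,\beta_j^{ols})|\ge M$ on $[0,\bar c_j]$ (which contains both $\tilde c_j$ and $\hat c_j$) makes the left side $\ge M|\hat c_j-\tilde c_j|$, so $|\hat c_j-\tilde c_j|\le\Delta_n/M$, the claimed estimate. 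A union bound over $j\in[k]$ and over the three high-probability events (Lemma~\ref{alemma:1}, Lemma~\ref{alemma:3}, and the covariance concentration) delivers the overall probability $1-4k\exp(-p)$.

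The main obstacle is the uniform control of $\Delta_n$: one needs the empirical process $c\mapsto\hat\ell_j(c,\tilde\beta_j^{ols})$ to be close to $c\mapsto\ell_j(c,\beta_j^{ols})$ \emph{simultaneously} over the continuum of scalings $c$ and at the data-dependent point $\tilde\beta_j^{ols}$ instead of $\beta_j^{ols}$. The absorption trick $\langle x_i,\beta\rangle c=\langle x_i,c\beta\rangle$ combined with the ball-uniform guarantee of Lemma~\ref{alemma:3} handles the $c$-continuum, and the Lipschitz/Cauchy--Schwarz split reduces the remaining error to the OLS deviation controlled by Lemma~\ref{alemma:1}; everything after that is constant-chasing to line up with the stated sample-size threshold.
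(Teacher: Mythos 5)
Your proposal is correct and reaches the paper's bound by the same overall skeleton (IVT for existence of $\tilde c_j$ and $\hat c_j$, a uniform deviation bound $\Delta_n$ over $c\in[0,\bar c_j]$, then the mean value theorem with the lower bound $M$ on $|\partial_z\ell_j|$), but your triangle-inequality decomposition of $\Delta_n$ differs from the paper's in a way worth noting. You insert the intermediate term $\hat\ell_j(c,\beta_j^{ols})$, so your $T_1$ compares two \emph{empirical} averages at $\tilde\beta_j^{ols}$ versus $\beta_j^{ols}$ (handled by Lipschitzness, Cauchy--Schwarz in the empirical Gram norm, and an extra covariance-concentration event $\widehat\Sigma\preceq 2\Sigma$ to pass to $\|\cdot\|_\Sigma$ and invoke Lemma~\ref{alemma:1}), while your $T_2$ applies Lemma~\ref{alemma:3} only over the deterministic segment $\{c\beta_j^{ols}:c\in[0,\bar c_j]\}$. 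The paper instead inserts $\ell_j(c,\tilde\beta_j^{ols})$: it uses Lemma~\ref{alemma:3} over the random ball $B^{\delta}_{\Sigma}(\beta_j^{ols})$ containing $\tilde\beta_j^{ols}$ (after whitening), so the ball-uniformity of that lemma is what absorbs the data-dependence of $\tilde\beta_j^{ols}$, and the remaining population-vs-population term is controlled by Lipschitzness plus Lemma~\ref{alemma:1} with no empirical Gram matrix needed. Both routes are valid and yield the same $O(GL\bar c_j^2\kappa_x^2\gamma\|\Sigma\|_2^{1/2}\|\beta_j^{ols}\|_2\sqrt{p/n})$ rate; yours costs one additional high-probability event, and since you apply Lemma~\ref{alemma:3} to the unwhitened $x_i$ your $T_2$ carries a factor $\|\Sigma\|_2$ rather than the $\|\Sigma\|_2^{1/2}$ the paper obtains by whitening first --- a constant-level discrepancy you would want to fix by whitening (or by restating the lemma for $v_i=\Sigma^{-1/2}x_i$) if you track constants at the paper's level of precision.
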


\begin{proof}[{\bf Proof of Lemma \ref{alemma:4}}]
We first proof the  existence of $\tilde{c}_j$.  By the definition we know that $\ell_j(0, \beta_j^{ols})=0$ and $\ell_j(\bar{c}_j, \beta_j^{ols})>1$, since $\ell_j$ is continuous, there must exists a $\tilde{c}_j$ which satisfies $\ell_j(\tilde{c}_j, \beta_j^{ols})=1$. 

Secondly, we will proof the existence of $\hat{c}_j\in (0, \bar{c}_j)$. Denote $v_i= \Sigma^{-\frac{1}{2}}x_i$ as the whitened random variable.  Note that by Assumption 1 we know that $z_jy$ is sub-Gaussian with norm $\|z_j y\|_{\psi_2}\leq \gamma$, thus Lemma 1 holds with $\gamma$. We denote the upper bound in Lemma 1 as $\delta$, that is $\delta= C_1\kappa_x\gamma\sqrt{\frac{p}{n}}$ and suppose the event in Lemma 1 holds. Then we have 
\begin{equation*}
    \tilde{\beta}^{ols}_j \in B^\delta_{\Sigma}(\beta_j^{ols})=\{\beta: \|\Sigma^{\frac{1}{2}}(\beta-\beta_j^{ols})\|_2\leq \delta\}  . 
\end{equation*}
Thus we have for all $c\in (0, \bar{c}_j]$
\begin{align}
    &|\hat{\ell}_j(c, \tilde{\beta}_j^{ols})- \ell_j(c,  \tilde{\beta}_j^{ols})|\\&\leq \sup_{\beta\in B^\delta_{\Sigma}(\beta_j^{ols})}|\hat{\ell}_j(c, \beta)- \ell_j(c,\beta)| \nonumber \\
    &\leq \sup_{c\in (0, \bar{c}_j] }\sup_{\beta\in B^\delta_{\Sigma}(\beta_j^{ols})}|\hat{\ell}_j(c, \beta)- \ell_j(c,\beta)| \nonumber \\
    &\leq \bar{c}_j  \sup_{c\in (0, \bar{c}_j] }\sup_{\beta\in B^\delta_{\Sigma}(\beta_j^{ols})} |\frac{1}{n}\sum_{i=1}^n f_j'(\langle x_i, \beta \rangle c) - \mathbb{E}[f'_j(\langle x, \beta \rangle c)] | \nonumber \\ 
    &=\bar{c}_j  \sup_{c\in (0, \bar{c}_j] }\sup_{\beta\in B^\delta_{\Sigma}(\beta_j^{ols})} |\frac{1}{n}\sum_{i=1}^n f_j'(\langle v_i, \Sigma^{\frac{1}{2}}\beta \rangle c) - \mathbb{E}[f'_j(\langle v, \Sigma^{\frac{1}{2}}\beta \rangle c)] \nonumber \\
    &=\bar{c}_j  \sup_{c\in (0, \bar{c}_j] }\sup_{\beta\in B^\delta(\bar{\beta}_j^{ols})}|\frac{1}{n}\sum_{i=1}^n f_j'(\langle v_i, \beta\rangle c)-\mathbb{E} f_j'(\langle w, \beta\rangle c) |\nonumber \\
    &= \bar{c}_j \sup_{\beta\in B^{\bar{c}_j \delta}(\bar{\beta}_j^{ols})}|\frac{1}{n}\sum_{i=1}^nf'_j(\langle v_i , \beta\rangle )- \mathbb{E}[f'_j(\langle w, \beta \rangle)]|. \label{aeq:13}
\end{align}
Where the second equality is due to that $\beta\in  B^\delta_{\Sigma}(\beta_j^{ols})$ is equivalent to $\Sigma^{\frac{1}{2}}\beta\in B^{\delta}(\bar{\beta}_j^{ols})$. 

Thus by Lemma 2 we have with probability at least $1-\exp(-p)$ (we have $n\geq \Omega(p\kappa_x^2\gamma^2\bar{c}_j^2)$) 
\begin{align}
   &\sup_{\beta\in B^{\bar{c}_j \delta}(\bar{\beta}_j^{ols})}|\frac{1}{n}\sum_{i=1}^nf'_j(\langle v_i , \beta\rangle )- \mathbb{E}[f'_j(\langle w, \beta \rangle)]|\nonumber \\
   &\leq 2(G(\|\bar{\beta}_j^{ols}\|+\bar{c}_j\delta)\|I\|_2 + L )\sqrt{\frac{p}{n}}\nonumber \\ 
  & = O(GL\kappa_x \gamma \|\Sigma\|^{1/2}_2\|\beta_j^{ols}\|_2\sqrt{\frac{p}{n}}).  \label{aeq:14}
\end{align}
Moreover by the $G$-Lipschitz property of $f'_j$ we have for any $\beta_1, \beta_2$ and $c\in [0, \bar{c}_j]$, 
\begin{align}
    |\ell_j(c, \beta_1)- \ell_j(c, \beta_2)| &\leq G\bar{c}_j^2 \mathbb{E}[\langle v, \Sigma^{\frac{1}{2}}(\beta_1-\beta_2)\rangle ] \nonumber \\
    &\leq G\bar{c}_j^2\kappa_x\|\Sigma^{\frac{1}{2}}(\beta_1-\beta_2)\|_2\mathbb{E}[\|v\|_2] \nonumber \\
    &\leq O( G\bar{c}_j^2\kappa_x \|\Sigma^\frac{1}{2}(\beta_1-\beta_2)\|_2). \label{aeq:15}
\end{align}
Where the last inequality is due the definition of $\kappa_x$.

Take $\beta_1= \tilde{\beta}_j^{ols}$ and $\beta_2= \beta^{ols}_j$ and by Lemma 1 we have 
\begin{equation} \label{aeq:16}
    |\ell_j(c, \tilde{\beta}_j^{ols})- \ell_j(c, \beta^{ols}_j)| \leq O(G\bar{c}_j^2\kappa_x^2\gamma \sqrt{\frac{p}{n}}).
\end{equation}
Combining with (\ref{aeq:13}), (\ref{aeq:14}), (\ref{aeq:15}) and (\ref{aeq:16}), in total we have for all $c\in (0, \bar{c}_j]$
\begin{equation}\label{aeq:17}
    |\hat{\ell}_j(c, \tilde{\beta}_j^{ols})- \ell_j(c, \beta^{ols}_j)|
    \leq O(GL\bar{c}_j^2\kappa_x^2\gamma \|\Sigma\|^{1/2}_2\|\beta_j^{ols}\|_2 \sqrt{\frac{p}{n}}). 
\end{equation}
Thus, when $c=\bar{c}_j$ we have 
\begin{equation}\label{aeq:18}
    \hat{\ell}_j(\bar{c}_j, \tilde{\beta}_j^{ols})\geq 1+\eta- O(GL\bar{c}_j^2\kappa_x^2\gamma \|\Sigma\|^{1/2}_2\|\beta_j^{ols}\|_2 \sqrt{\frac{p}{n}}).
\end{equation}
Thus when $n\geq \Omega(\frac{p\|\Sigma\|_2\|\beta_j^{ols}\|_2^2G^2L^2\bar{c}_j^4\kappa_x^4\gamma^2  }{\eta^2})$. The left term of (\ref{aeq:18}) is greater than 1, also by the definition we have $\hat{\ell}_j(0, \tilde{\beta}_j^{ols})=0$. Thus, there exists a $\hat{c}_j$ such that $\hat{\ell}_j(\hat{c}_j, \tilde{\beta}_j^{ols})=1$. 

Thirdly, we will show the approximation error of $\hat{c}_j$ w.r.t $\tilde{c}_j$. Using the Taylor's series expansion of $c\mapsto \ell_j(c, \beta_j^{ols})$ around $\tilde{c}_j$ and the assumption on the derivative of $\ell$ with respect to its first argument, we obtain 
\begin{align*}
     M|\hat{c}_j- \tilde{c}_j| &\leq |\ell_j( \hat{c}_j, \beta_j^{ols})- \ell_j( \tilde{c}_j, \beta_j^{ols})| \\ 
     &= |\ell_j( \hat{c}_j, \beta_j^{ols})- \ell_j( \hat{c}_j, \tilde{\beta}_j^{ols})|+ |\ell_j( \hat{c}_j, \tilde{\beta}_j^{ols})-1|\\
     & = |\ell_j( \hat{c}_j, \beta_j^{ols})- \ell_j( \hat{c}_j, \tilde{\beta}_j^{ols})|+  |\ell_j( \hat{c}_j, \tilde{\beta}_j^{ols})-\hat{\ell}_j( \hat{c}_j, \tilde{\beta}_j^{ols})| \\
     &\leq O(GL\bar{c}_j^2\kappa_x^2\gamma \|\Sigma\|^{1/2}_2\|\beta_j^{ols}\|_2\sqrt{\frac{p}{n}}).
\end{align*}
Where the last inequality is due to (\ref{aeq:17}) and (\ref{aeq:13}). 
\end{proof}

\begin{proof}[{\bf Proof of Theorem 5 }]

 We have for each $\hat{\beta}_j^{nlr}$
\begin{align}
    &\|\hat{\beta}_j^{nlr}- \beta^*_j\|_\infty \leq \|\hat{c}_j\tilde{\beta}_j^{ols}- \tilde{c}_j \beta_j^{ols}\|_\infty + \|\tilde{c}_j \beta_j^{ols}- \beta_j^*\|_\infty\nonumber \\ 
    &\leq \|\hat{c}_j\tilde{\beta}_j^{ols}- \tilde{c}_j \beta_j^{ols}\|_\infty+ \|\tilde{c}_j \beta_j^{ols}- c_j\beta_j^{ols}\|_\infty + \|  c_j\beta_j^{ols}- \beta_j^*\|_\infty.  \label{aeq:19}
\end{align}
We first bound the term of $|\tilde{c}_j - c_j|$. By definition we have $c_j \mathbb{E}[f_j'(\langle x, \beta_j^*\rangle)]=1$ and $\tilde{c}_j\mathbb{E}[f'_j(\langle x, \beta^{ols}_j\rangle \tilde{c}_j)]=1$, we get 
\begin{align*}
|\ell_j(\tilde{c}_j, \beta_j^{ols})- \ell_j(c_j, \beta_j^{ols})|&=|1-  \ell_j(c_j, \beta_j^{ols})| \\
    &=| c_j \mathbb{E}[f_j'(\langle x, \beta_j^*\rangle)]- c_j \mathbb{E}[f_j'(\langle x, \beta_j^{ols}\rangle  c_j)]|\\
    &\leq G|c_j|\mathbb{E}[\langle x,\beta_j^*- c_j\beta_j^{ols}\rangle]\\
    & \leq Gc_j \|\beta_j^*- c_j\beta_j^{ols}\|_\infty \mathbb{E}\|x\|_1\\
    &\leq Gc_j \kappa_x \|\beta_j^*- c_j\beta_j^{ols}\|_\infty. 
\end{align*}
Thus, by the assumption of the bounded deviation of $\ell(c, \beta_j^{ols})$ on $\max\{\bar{c}_j, c_j\}$ we have 
\begin{align*}
    M|\tilde{c}_j - c_j|\leq |\ell_j(\tilde{c}_j, \beta_j^{ols})- \ell_j(c_j, \beta_j^{ols})|
    \leq Gc_j \kappa_x \|\beta_j^*- c_j\beta_j^{ols}\|_\infty.
\end{align*}
Thus by Theorem 4 we have 
\begin{equation}\label{aeq:20}
    |\tilde{c}_j - c_j|\leq O(M^{-1}rG^2c_j^2\kappa_x^4\sqrt{\rho_2}\rho_{\infty}\frac{\|\beta_j^*\|^2_\infty}{\sqrt{p}}). 
\end{equation}
Thus for the second term of (\ref{aeq:19}) we have 
\begin{equation}\label{aeq:21}
     \|\tilde{c}_j \beta_j^{ols}- c_j\beta_j^{ols}\|_\infty \leq  O(M^{-1}rG^2c_j^2\kappa_x^4\sqrt{\rho_2}\rho_{\infty}\frac{\|\beta_j^{ols}\|_\infty \|\beta_j^*\|^2_\infty}{\sqrt{p}}). 
\end{equation}
For the first term of (\ref{aeq:19}) we have 
\begin{align}
    &\|\hat{c}_j\tilde{\beta}_j^{ols}- \tilde{c}_j \beta_j^{ols}\|_\infty \leq |\hat{c}_j| \|\tilde{\beta}_j^{ols}- \beta_j^{ols}\|_\infty + |\hat{c}_j- \tilde{c}_j| \|\beta_j^{ols}\|_\infty . \nonumber  \\
    &\leq O\big( (\tilde{c}_j+ M^{-1}GL\bar{c}_j^2\kappa_x^3\gamma^2 \|\Sigma\|^{1/2}_2\|\lambda_{\min}(\Sigma)^{-\frac{1}{2}} \beta_j^{ols}\|_2{\frac{p}{n}})\nonumber \\ 
    &+ M^{-1}GL\bar{c}_j^2\kappa_x^2\gamma  \|\Sigma\|^{1/2}_2\|\beta_j^{ols}\|_2\sqrt{\frac{p}{n}} \|\beta^{ols}_j\|_\infty
    \big) \nonumber \\
    &\leq O\big( \bar{c}_j\kappa_x\gamma \lambda_{\min}^{-\frac{1}{2}}(\Sigma)\sqrt{\frac{p}{n}}\max\{1, \|\beta^{ols}_j\|_\infty \} \big).  \label{aeq:22}
\end{align}
By Theorem 4 we have the third term of (\ref{aeq:19}) is bounded by 
\begin{equation}\label{aeq:23}
    \|  c_j\beta_j^{ols}- \beta_j^*\|_\infty\leq O(c_jGr\kappa_x^3\sqrt{\rho_2}\rho_{\infty}\frac{\|\beta_j^*\|^2_\infty}{\sqrt{p}}). 
\end{equation}
By (\ref{aeq:21}), (\ref{aeq:22}) and (\ref{aeq:23}) we have 
\begin{align}\label{aeq:24}
    &\|\hat{\beta}_j^{nlr}- \beta^*_j\|_\infty \leq O\big(M^{-1}rG^2c_j^2\kappa_x^4\sqrt{\rho_2}\rho_{\infty}\frac{\|\beta_j^{ols}\|_\infty \|\beta_j^*\|^2_\infty}{\sqrt{p}} + \nonumber \\
    & \bar{c}_j\kappa_x\gamma \lambda_{\min}^{-\frac{1}{2}}(\Sigma)\sqrt{\frac{p}{n}}\max\{1, \|\beta^{ols}_j\|_\infty \}  + c_jGr\kappa_x^3\sqrt{\rho_2}\rho_{\infty}\frac{\|\beta_j^*\|^2_\infty}{\sqrt{p}} \big). 
\end{align}
Take this into the previous equation and take $G, L, M, r, \kappa_x,\gamma, C, c_j, \bar{c_j}=\Theta(1)$ we can get
\begin{multline}
    \|\hat{\beta}_j^{nlr}- \beta^*_j\|_\infty \leq O(\sqrt{\rho_2}\rho_{\infty}\frac{\|\beta_j^*\|^2_\infty\max\{1, \|\beta_j^{ols}\|_\infty\} }{\sqrt{p}}\\
    +\lambda_{\min}^{-\frac{1}{2}}(\Sigma)\sqrt{\frac{p}{n}}\max\{1, \|\beta^{ols}_j\|_\infty \} )
\end{multline}

By Theorem 4 we have 
\begin{equation*}
    \|\beta^{ols}_j\|_\infty \leq O( \|\frac{1}{c_j}\beta^*_j\|_\infty+ Gr\kappa_x^3\sqrt{\rho_2}\rho_{\infty}\frac{\|\beta_j^*\|^2_\infty}{\sqrt{p}}).
\end{equation*}
Thus we have 
\begin{multline*}
    \|\hat{\beta}_j^{nlr}- \beta^*_j\|_\infty \leq O(\max\{1, \sqrt{\rho_2}\rho_\infty\}\lambda^{-\frac{1}{2}}_{\min} (\Sigma) \sqrt{\frac{p}{n}}\|\beta_j^*\|_{\infty}\max\{1, \|\beta_j^*\|_{\infty}\}  \\ + \rho_2\rho_{\infty}^2 \frac{\max\{\|\beta_j^*\|_{\infty}^2, 1\}\|\beta_j^*\|_{\infty}^2}{\sqrt{p}}). 
\end{multline*}
Note that the equality holds when 
$$n\geq \Omega(\frac{p\|\Sigma\|_2\|\beta_j^{ols}\|_2^2G^2L^2\bar{c}_j^4\kappa_x^4\gamma^2 }{\eta^2}).$$
Which will holds when 
\begin{multline}
    n\geq \Omega(\frac{\|\Sigma\|_2p^2G^4L^2\bar{c}_j^4\kappa_x^{10}\max\{\rho_2\rho^2_{\infty},1 \}r^2\gamma^2\|\beta^*_j\|^2_{\infty}\max\{1, \|\beta^*_j\|_\infty^2\}}{\eta^2} \\
    \max\{1, \frac{1}{c_j^2}\})\geq 
    \Omega(\frac{ p^2\|\Sigma\|_2\|\beta_j^{ols}\|_\infty^2G^2L^2\bar{c}_j^4\kappa_x^4\gamma^2  }{\eta^2})
\end{multline}
\end{proof}
\begin{proof}[{\bf Proof of Theorem 2}]
Since $x$ is Gaussian, Lemma 1  holds with $\kappa_x=O(1)$. The proof of Theorem 2 is almost the same as in the proof of Theorem 4. 

 By Theorem 1 we have for each $\hat{\beta}_j^{nlr}$ 
\begin{align}
    &\|\hat{\beta}_j^{nlr}- \beta^*_j\|_2= \|\hat{c}_j\tilde{\beta}_j^{ols}- c_j\beta_j^{ols}\|_2 \\
    &\leq \hat{c}_j\|\tilde{\beta}_j^{ols}-\beta_j^{ols}\|_2+|\hat{c}_j-c_j| \|\beta_j^{ols}\|_2.  \label{aeq:35}
\end{align}
For the first term of (\ref{aeq:35}) we have 
\begin{align}
    \hat{c}_j\|\tilde{\beta}_j^{ols}-\beta_j^{ols}\|_2 &\leq (c_j+ M^{-1}GL\bar{c}_j^2\gamma^2\|\Sigma\|^{1/2}_2\|\beta_j^{ols}\|_2{\frac{p}{n}})\lambda^{-\frac{1}{2}}_{\min}(\Sigma) \nonumber \\
    & \leq O(c_j \gamma \lambda^{-\frac{1}{2}}_{\min}(\Sigma) \sqrt{\frac{p}{n}}).
\end{align}
For the second term of (\ref{aeq:35}) we have 
\begin{equation}
   |\hat{c}_j-c_j| \|\beta_j^{ols}\|_2 \leq O(M^{-1}GL\bar{c}_j^2\gamma \|\Sigma\|^{1/2}_2\|\beta_j^{ols}\|_2\sqrt{\frac{p}{n}}\frac{1}{c_j}\|\beta^*_j\|_2).
\end{equation}
In total we have 
\begin{multline}
    \|\hat{\beta}_j^{nlr}- \beta^*_j\|_2 \leq  O(c_j \gamma \lambda^{-\frac{1}{2}}_{\min}(\Sigma) \sqrt{\frac{p}{n}}\\
    +M^{-1}GL {c}_j^2\gamma \|\Sigma\|^{1/2}_2\|\beta_j^{ols}\|_2\sqrt{\frac{p}{n}}\frac{1}{c_j}\|\beta^*_j\|_2)
\end{multline}
If we assume $\eta, c_j, \gamma, C, M, G, L =\Theta(1) $ we can get 
\begin{equation}
       \|\hat{\beta}_j^{nlr}- \beta^*_j\|_2 \leq O(\max\{1, \|\beta_j^*\|_2\}\lambda^{-\frac{1}{2}}_{\min}\sqrt{\frac{p}{n}} ) .
\end{equation}
The sample show satisfies 
$$n\geq \Omega(\frac{p\|\Sigma\|_2\|\beta_j^{ols}\|_2^2G^2L^2\bar{c}_j^4\gamma^2  }{\eta^2})=\Omega(\frac{p\|\Sigma\|_2\|\beta_j^*\|_2^2G^2L^2\bar{c}_j^4\gamma^2 }{\eta^2}\frac{1}{c_j^2}) $$
\end{proof}
\begin{proof}[{\bf Proofs of Theorem 3 and 6}]
The proofs of Theorem 3 and 6 are almost the same as them of Theorem 2 and 5, respectively. Instead of using Lemma 1 here we will use Lemma 3. For convenience we omit them here. 
\end{proof}
\begin{proof}[{\bf Proof of Theorem 7 }] The proof is motivated by \cite{erdogdu2019scalable}. 
For the function $f(z)=\frac{1}{1+e^{-z}}$ we have 
\begin{equation*}
    f'(z) = \frac{e^z}{(1+e^z)^2}, f^{(3)}(z) = \frac{e^z(1-4e^z+e^{2z})}{(1+e^z)^4}. 
\end{equation*}
It is notable that both of $f'(z)$ and $f^{(3)}$ are even functions. We will use the local convexity for $z\geq 0$ around $z=2.5$, thus we can get 
\begin{equation}
    f'(z)\geq f'(2.5)+f^{(2)}(2.5)(z-2.5)= a-bz, 
\end{equation}
where $a=f'(2.5)- 2.5f^{(2)}(2.5)$ and $b=-f^{(2)}(2.5)$. Denote $W=\mathcal{N}(0, 1)$ and $\phi$ as the density of the standard Gaussian distribution, by the assumption we have 
\begin{align*}
    \ell(z, \beta^{ols})&=z\mathbb{E}[f'(\langle x, \beta^{ols}\rangle )z]= z\mathbb{E}[f'(\frac{Wz}{20})]\\
    &= 2z\int_{0}^{\infty} f'(\frac{wz}{20})\phi(w)dw \\
    &\geq 2z \int_{0}^{\frac{20a}{bz}}(a-\frac{bwz}{20})\phi(w)dw \\
    &= 2z\{a\Phi(\frac{20a}{bz})-\frac{a}{2}-\frac{bz}{20\sqrt{2\pi}}(1-\exp(-\frac{200a^2}{b^2z^2})\}.
\end{align*}
Take $z=6$ and $\eta=0.22$ we have $\ell(z, \beta^{ols})>1+\eta$.
Next we will prove that $ \ell'(z, \beta^{ols})$ is lower bounded by 0.19. 

By the Stein's lemma (the special case of Definition 3 with $z$ is Gaussian) we can get 
\begin{align*}
    \ell'(z, \beta^{ols})&=\mathbb{E}[f'(\frac{Wz}{20})]+(\frac{z}{20})^2\mathbb{E}[f^{(3)}(\frac{Wz}{20})]\\
    &\geq 2\{a\Phi(\frac{20a}{bz})-\frac{a}{2}-\frac{bz}{20\sqrt{2\pi}}(1-\exp(-\frac{200a^2}{b^2z^2})\} \\
    &-\frac{9}{100}\max|f^{(3)}|\geq 0.19. 
\end{align*}
\end{proof}

\section{Experiments}\label{sec:7}
We conduct experiments on three types of link functions: polynomial, sigmoid, and logistic function, as well as an arbitrary mix of them. Formally, the polynomial link functions include $f(x)=x, x^3, x^5$; the sigmoid link function is defined as $f(x)=(1+e^{-x})^{-1}$; the logistic link function refers to $f(x)=\log(1+e^{-x})$. Due to the statistical setting we focused on in the paper, we will perform our algorithm on the synthetic data, and the same experimental setting has been used in the previous work such as \cite{na2018high,yang2017learning,erdogdu2016scaled}.

\noindent\textbf{Experimental setting.} We sample all coefficient $z_{i,j}$ and noise $\epsilon$ i.i.d. from standard Gaussian distribution $N(0,1)$ across each experiment. Each $\beta^*_j$ is generated by sampling from $N(1, 16\mathbb{I}_d)$. We consider two distributions for generating $x$: Gaussian and Uniform distribution (corresponds to thr sub-Gaussian case). To satisfy the requirement of Theorem~\ref{thm:new3}, in both cases the standard variance is scaled by $1/p$ and this is also used in \cite{erdogdu2016scaled}, where $p$ is the data dimension. Thus, in the Gaussian case, $x\sim N(0,\frac{1}{p}\mathbb{I}_p)$, while in the sub-Gaussian case $x$ is sampled from a uniform distribution, {\em i.e., $x~\sim U([-1/p, 1/p])^p$}. Finally, given the list $F=[f_1,\ldots,f_k]$ of link functions, the response $y$ is computed via \eqref{eq:1}.  It is notable that the experimental results with different number of link functions $k$ are incomparable since when $k$ is changed Model (\ref{def:1}) will also be changed.

These experiments are divided into two parts, examining how the sample size $n$ and the size of the sub-sample set $S$ affect the algorithm performance. In the first part we vary $n$ from $100\,000$ to $500\,000$ with fixed $p=20$ and $|S|=n$, while in the second part we vary $|S|$ from $0.01n$ to $n$, with fixed $n=500\,000$ and $p=20$. In each part we test the algorithm against various data distribution/link function combinations. For each experiment, in order to support our theoretical analysis, we will use the (maximal) relative error as the measurement, that is when $x$ is Gaussian we use $\max_{i\in [k]}\frac{\|\beta_i-\beta_i^*\|_2}{\|\beta_i^*\|_2}$ and when $x$ is sub-Gaussian we will use $\max_{i\in [k]}\frac{\|\beta_i-\beta_i^*\|_\infty}{\|\beta_i^*\|_\infty}$. For each experiment we repeat 20 times and take the average as the final output.

\textbf{Experiment results. }Each of Figure~\ref{fig:monomial-link}-\ref{fig:logexp-link} illustrates the result for a single type of link function. We can see that the relative error decreases steadily as the sample size $n$ grows which is due to the $O(\frac{1}{\sqrt{n}})$ converge rate as our theorem states. Also, we can see that the size of $S$ doesn't affect the final relative error much if $|S|$ is large enough, {\em i.e., in all cases, choosing large enough $|S|\geq 0.2n$ is sufficient to achieve a relative error roughly the same as when $|S|=n$,} which also has been mentioned in our theorems theoretically.   

We further investigate the case when $F$ contains different types of link functions. In Figure~\ref{fig:polynomial-sub-gaussian}, we let $F$ contain polynomials with different degrees ($x, x^3, x^5$), and there are roughly $\frac{k}{3}$ functions for each degree. Similarly, in Figure~\ref{fig:mixed-sub-gaussian} we also mix polynomial links with the other two types of links, i.e., logistic link and log-exponential link, and there are roughly $\frac{k}{3}$ functions for each type of link function. Our algorithm achieves similar performance as in the previous settings. 

\begin{figure}[!htbp]
    \centering
    \begin{subfigure}[b]{0.5\textwidth}
    \includegraphics[width=\textwidth]{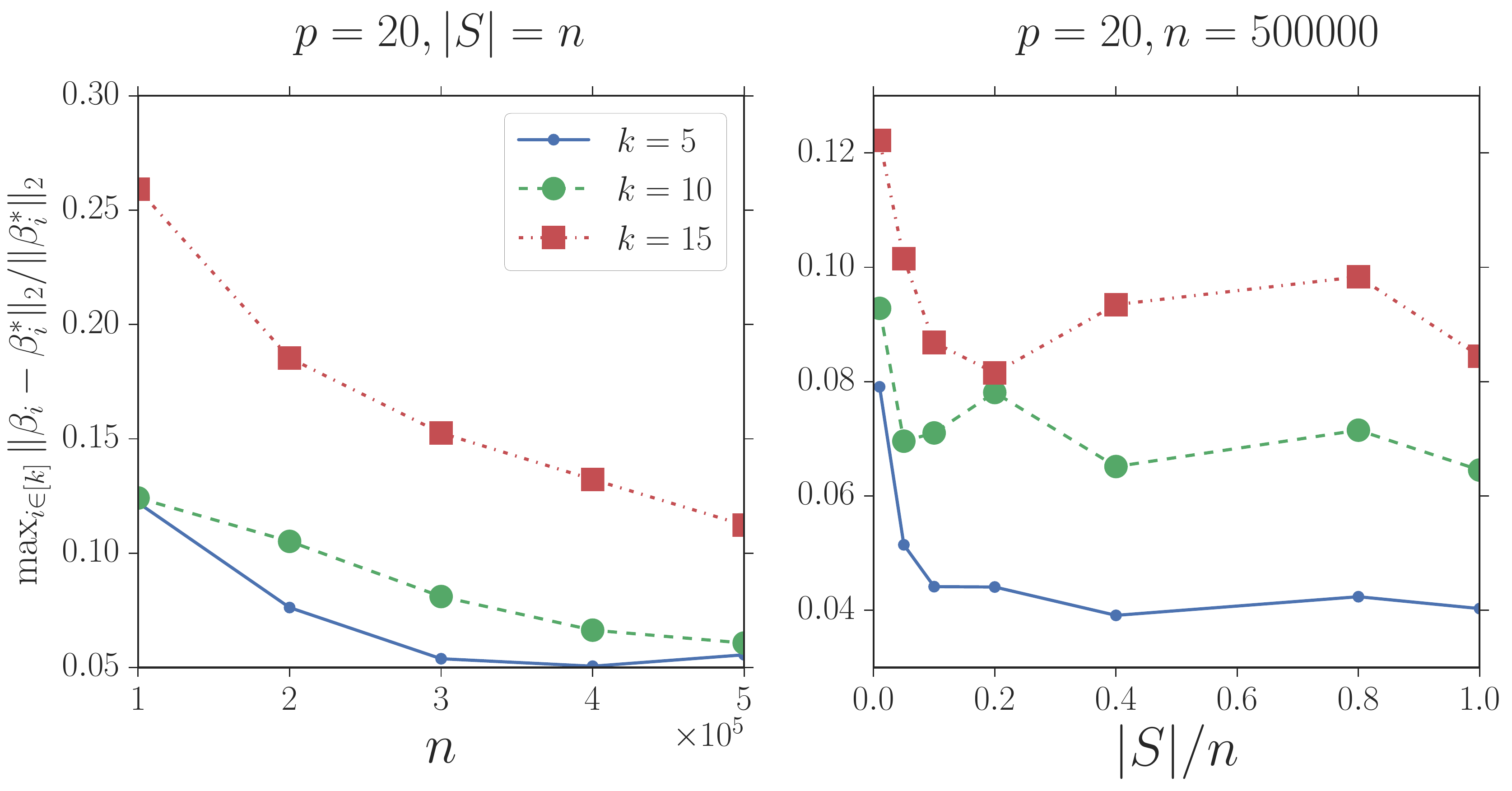}
    \caption{Gaussian data\label{fig:monomial-gaussian}}
    \end{subfigure}
    ~
    \begin{subfigure}[b]{0.5\textwidth}
    \includegraphics[width=\textwidth]{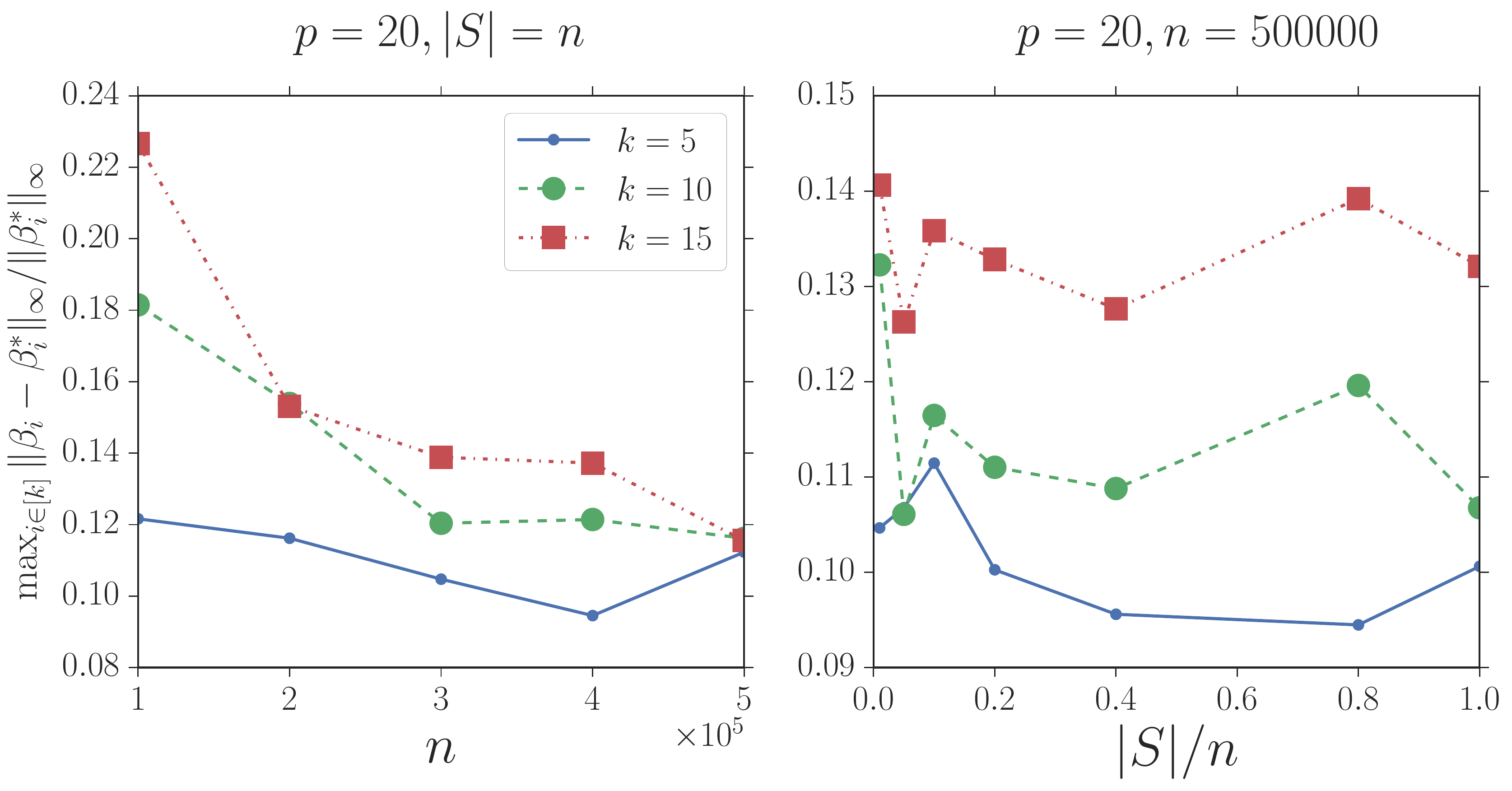}
    \caption{sub-Gaussian data\label{fig:monomial-sub-gaussian}}
    \end{subfigure}    
    \caption{Single type of link function $f(x)=x^3$\label{fig:monomial-link}}
\end{figure}

\begin{figure}[!htbp]
    \centering
    \begin{subfigure}[b]{0.5\textwidth}
    \includegraphics[width=\textwidth]{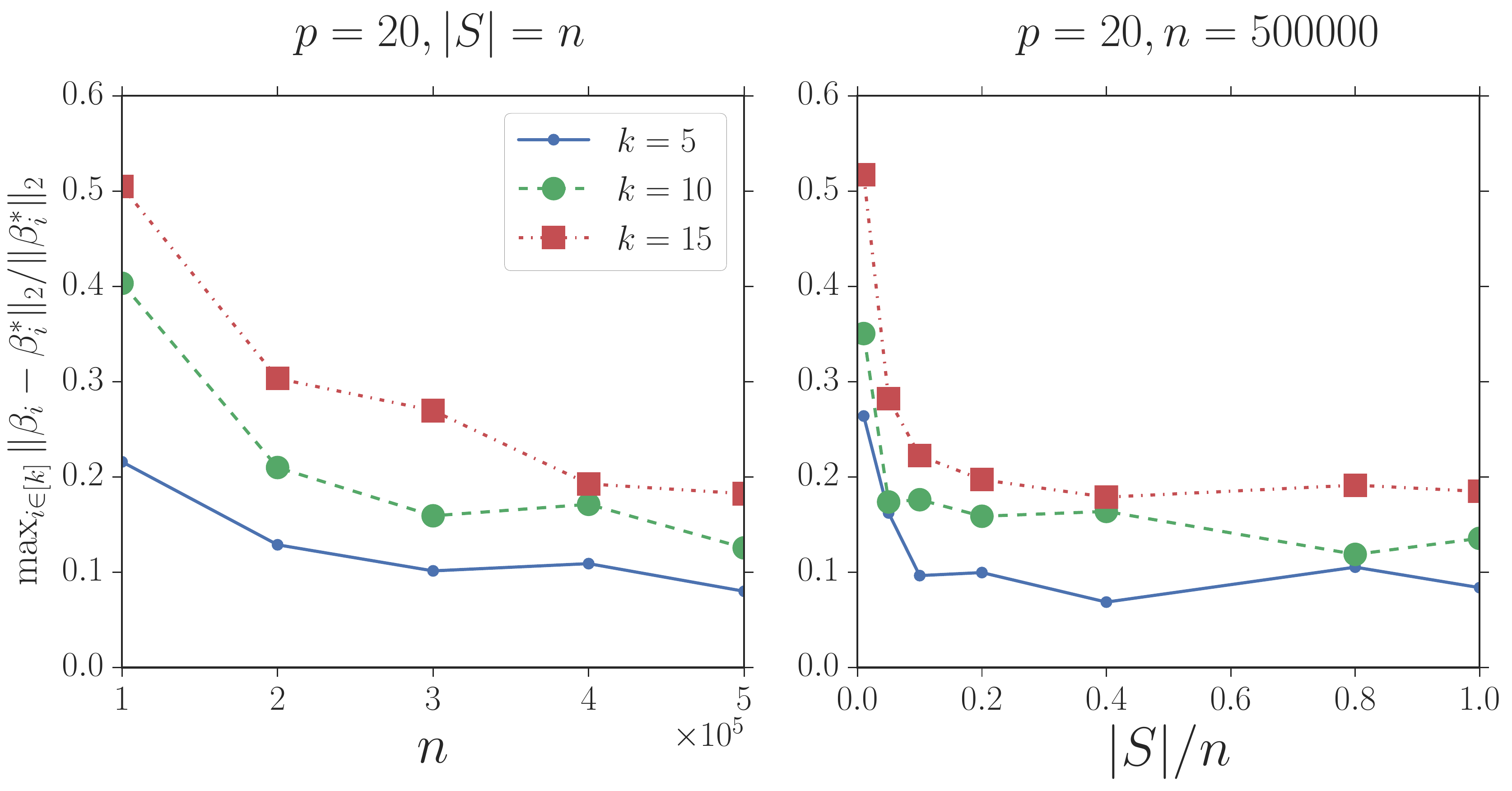}
    \caption{Gaussian data\label{fig:logistic-gaussian}}
    \end{subfigure}
    ~
    \begin{subfigure}[b]{0.5\textwidth}
    \includegraphics[width=\textwidth]{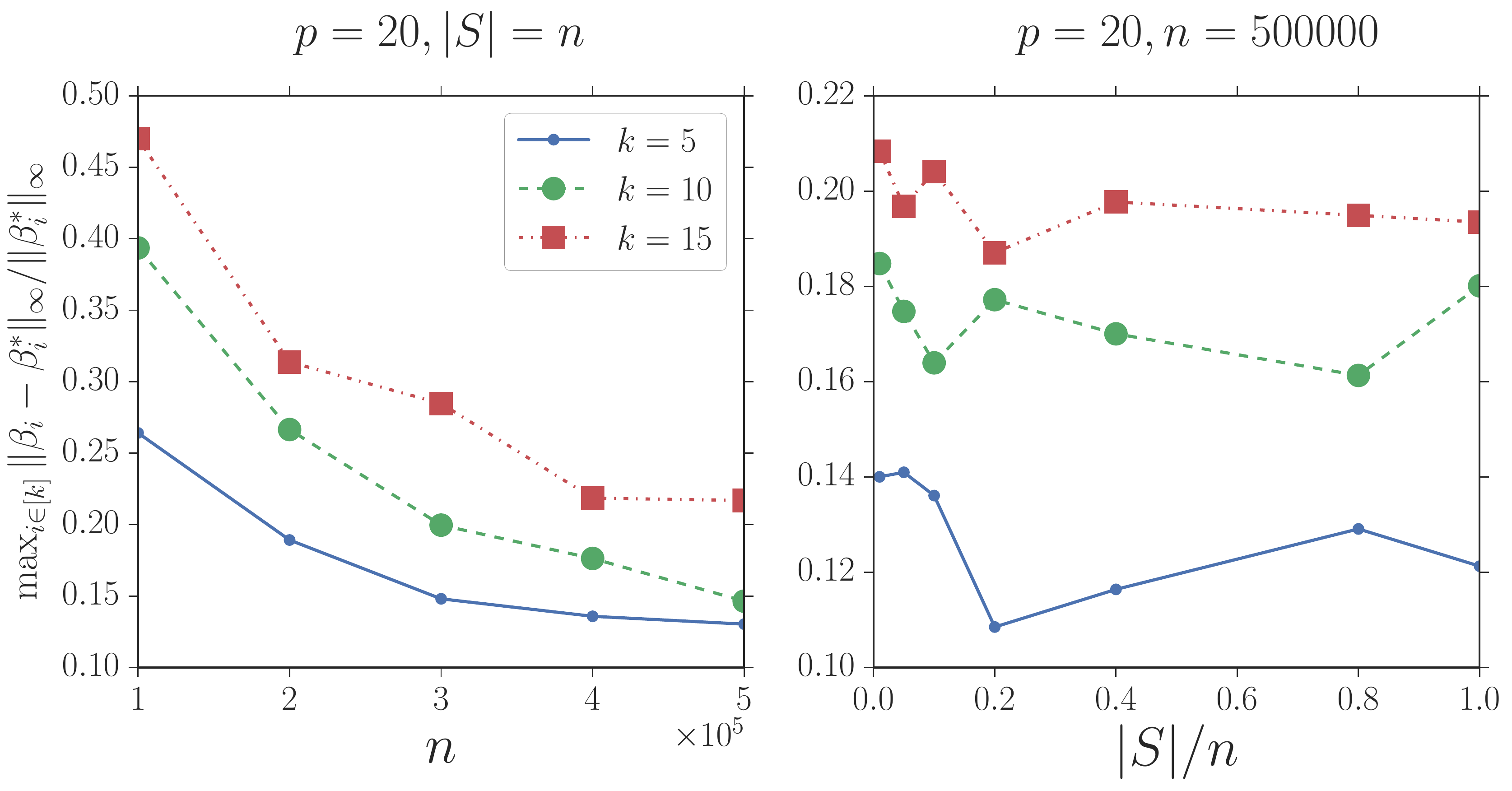}
    \caption{sub-Gaussian data\label{fig:logistic-sub-gaussian}}
    \end{subfigure}    
    \caption{Single type of link function $f(x)=(1+e^{-x})^{-1}$\label{fig:logistic-link}}
\end{figure}

\begin{figure}[!htbp]
    \centering
    \begin{subfigure}[b]{0.5\textwidth}
    \includegraphics[width=\textwidth]{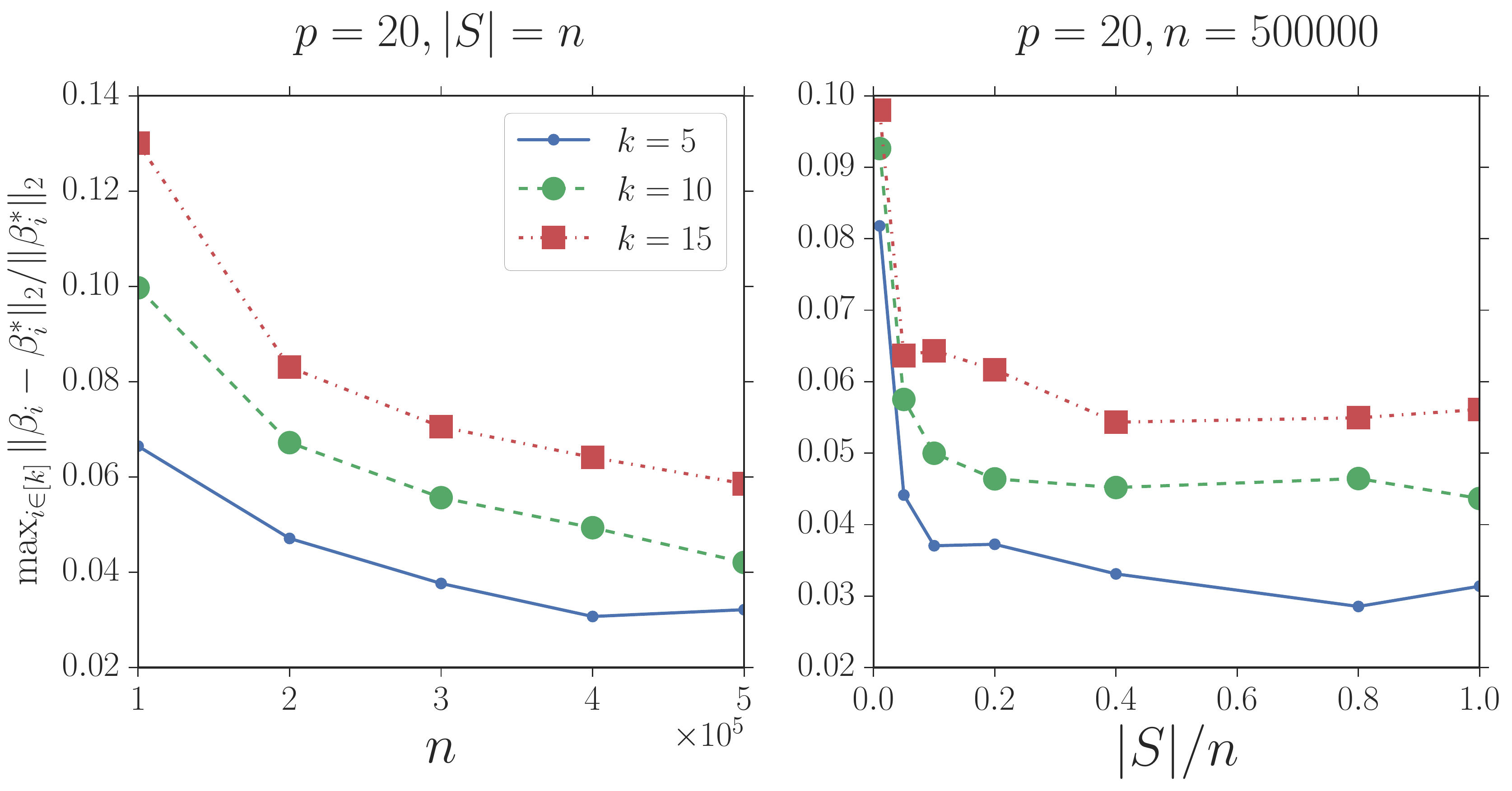}
    \caption{Gaussian data\label{fig:logexp-gaussian}}
    \end{subfigure}
    ~
    \begin{subfigure}[b]{0.5\textwidth}
    \includegraphics[width=\textwidth]{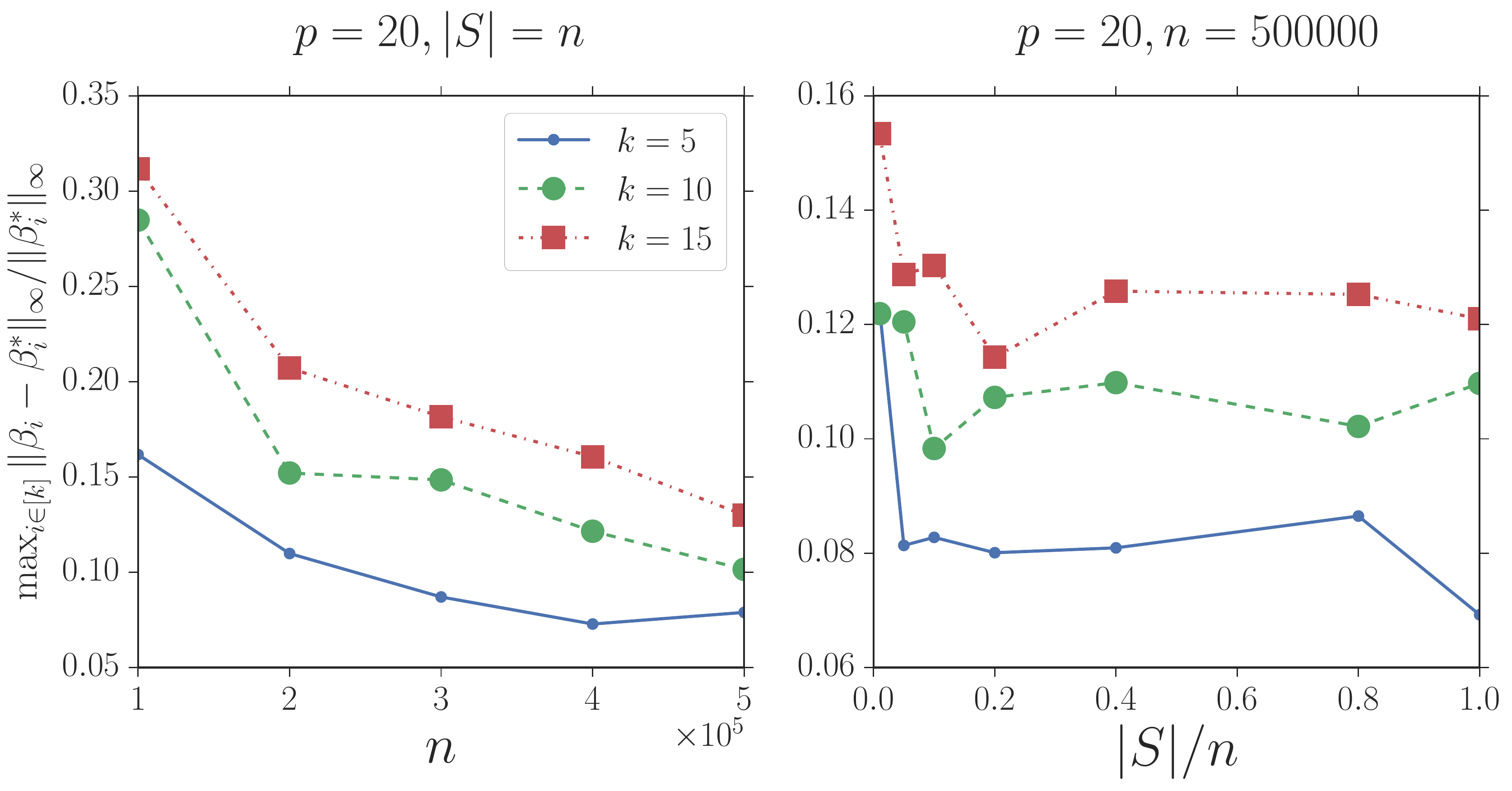}
    \caption{sub-Gaussian data\label{fig:logexp-sub-gaussian}}
    \end{subfigure}    
    \caption{Single type of link function $f(x)=\log(1+e^{-x})$\label{fig:logexp-link}}
\end{figure}

\begin{figure}[!htbp]
    \centering
    \begin{subfigure}[b]{0.5\textwidth}
    \includegraphics[width=\textwidth]{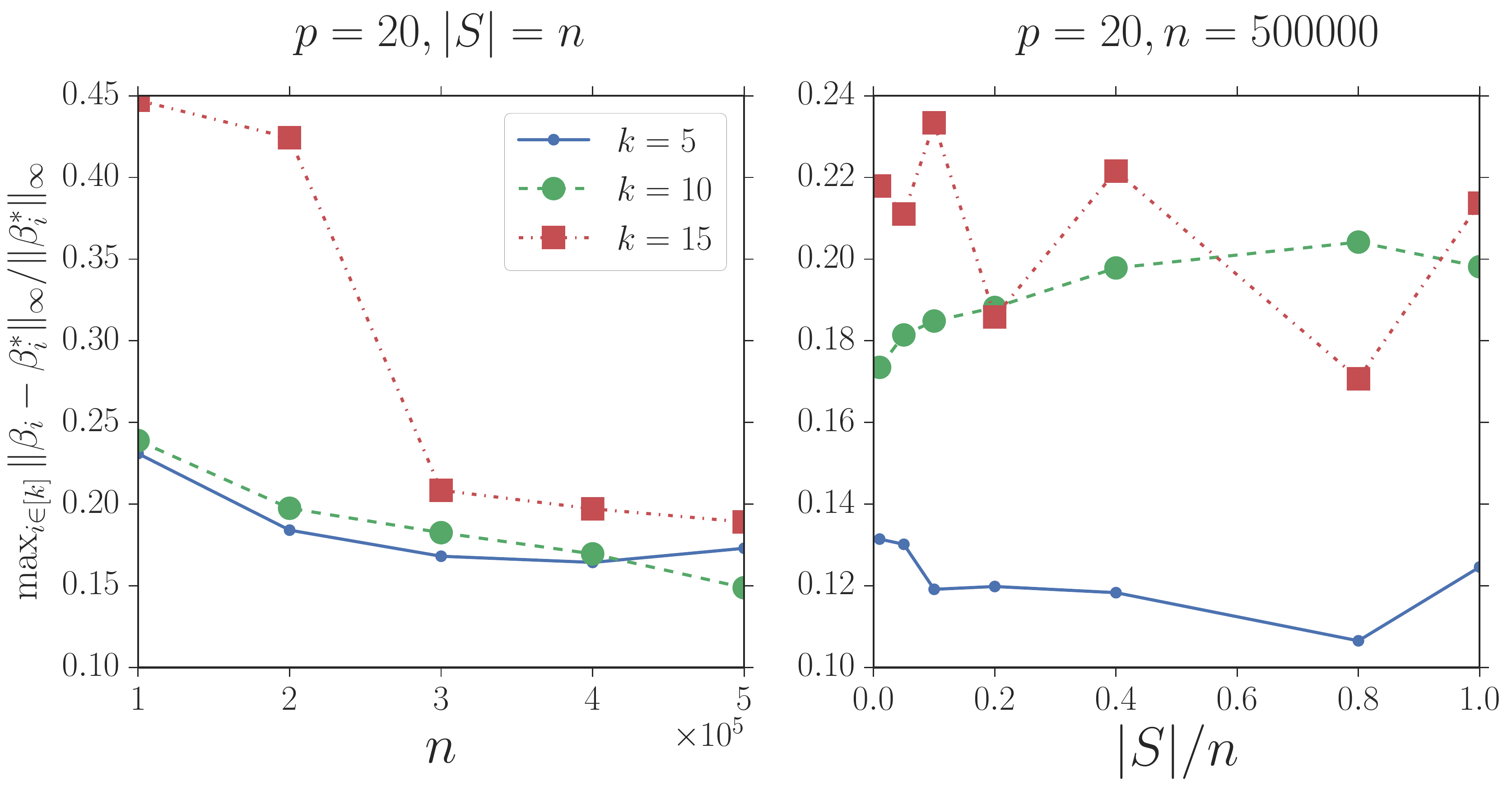}
    \caption{$f(x)=x, x^3, x^5$, sub-Gaussian data\label{fig:polynomial-sub-gaussian}}
    \end{subfigure}
    ~
    \begin{subfigure}[b]{0.5\textwidth}
    \includegraphics[width=\textwidth]{figs/results_for_monomial-sub-gaussian_err_vs_n_s-0831.pdf}
    \caption{$f(x)=x^3, (1+e^{-x})^{-1}, \log(1+e^{-x})$, sub-Gaussian data\label{fig:mixed-sub-gaussian}}
    \end{subfigure}    
    \caption{Mixed different type of link functions\label{fig:mixed-link}}
\end{figure}

\section{Conclusion}\label{sec:8}
{We studied a new model called {\it stochastic linear combination of non-linear regressions} and provided the first estimation error bounds for both Gaussian and bounded sub-Gaussian cases. Our algorithm is based on Stein's lemma and its generalization, the zero-bias transformation. Moreover, we used the sub-sampling of the covariance matrix to accelerate our algorithm. Finally, we conducted experiments whose results support our theoretical analysis.}

{There are still many open problems: 1) In the paper, we only focused on the low dimensional case (where $n\gg p$). However, it is unknown whether we can extend our methods to the high dimensional sparse case. 2) In the paper we assume $x$ is either Gaussian or Sub-Gaussian, we do not know its behaviors in the Sub-Exponential distribution case. 3) Where we can further accelerate our methods is still unknown. We will leave these problems as future work. }
\section*{References}
\bibliographystyle{elsarticle-num} 
\bibliography{nonlinear}
\end{document}